%% file: example_paper.tex
\documentclass{article}

\usepackage{microtype}
\usepackage{graphicx}
\usepackage{subcaption}
\usepackage{booktabs} % for professional tables

\usepackage{hyperref}

\usepackage[accepted]{icml2026}

\usepackage{amsmath}
\usepackage{amssymb}
\usepackage{mathtools}
\usepackage[table]{xcolor}
\usepackage{array}
\usepackage{makecell}
\usepackage{booktabs}
\usepackage{multirow}
\usepackage{amsthm}
\theoremstyle{plain}
\usepackage{bm}
\usepackage{graphicx}
\usepackage{arydshln}
\usepackage{nicematrix}
\usepackage{colortbl, booktabs}
\usepackage[export]{adjustbox}
\usepackage{xr}
\usepackage[capitalize,noabbrev]{cleveref}

\theoremstyle{plain}
\newtheorem{theorem}{Theorem}[section]
\newtheorem{proposition}[theorem]{Proposition}

\theoremstyle{definition}

\theoremstyle{remark}

\usepackage[textsize=tiny]{todonotes}

\icmltitlerunning{Intrinsic Gradient Suppression for Label-Noise Prompt
Tuning in Vision–Language Models}

\begin{document}

\twocolumn[
  \icmltitle{Intrinsic Gradient Suppression for Label-Noise Prompt
Tuning in Vision–Language Models}

  % It is OKAY to include author information, even for blind submissions: the
  % style file will automatically remove it for you unless you've provided
  % the [accepted] option to the icml2026 package.

  % List of affiliations: The first argument should be a (short) identifier you
  % will use later to specify author affiliations Academic affiliations
  % should list Department, University, City, Region, Country Industry
  % affiliations should list Company, City, Region, Country

  % You can specify symbols, otherwise they are numbered in order. Ideally, you
  % should not use this facility. Affiliations will be numbered in order of
  % appearance and this is the preferred way.
  \icmlsetsymbol{equal}{*}
  \icmlsetsymbol{zju}{1}
  \icmlsetsymbol{tongji}{2}
  \icmlsetsymbol{cinc}{3}
  \icmlsetsymbol{cas}{4}
  
  \begin{icmlauthorlist}
    \icmlauthor{Jiayu Li}{equal,zju}
    \icmlauthor{Jiaxin Qi}{equal,cinc}
    \icmlauthor{Sheng Zhou}{zju}
    \icmlauthor{Jianqiang Huang}{tongji,cinc,cas}
    \icmlauthor{Xiansheng Hua}{tongji}
    \textcolor{white}{\fontsize{0.0001pt}{24pt}\selectfont \icmlauthor{dummy}{zju1,tongji1,cinc1,cas1}}
    
  \end{icmlauthorlist}
  
  \icmlaffiliation{zju1}{Zhejiang University, Hangzhou, China}
  \icmlaffiliation{tongji1}{Tongji University, Shanghai, China}
  \icmlaffiliation{cinc1}{Computer Network Information Center, Beijing, China}
  \icmlaffiliation{cas1}{University of Chinese Academy of Sciences, Beijing, China}

  \icmlcorrespondingauthor{Jianqiang Huang}{jianqiang.jqh@gmail.com}
  % You may provide any keywords that you find helpful for describing your
  % paper; these are used to populate the "keywords" metadata in the PDF but
  % will not be shown in the document
  \icmlkeywords{Contrastive Vision-Language Model, Prompt Tuning, Label Noise, Zero-Shot Prediction}

  \vskip 0.3in
]

% this must go after the closing bracket ] following \twocolumn[ ...

% This command actually creates the footnote in the first column listing the
% affiliations and the copyright notice. The command takes one argument, which
% is text to display at the start of the footnote. The \icmlEqualContribution
% command is standard text for equal contribution. Remove it (just {}) if you
% do not need this facility.

% Use ONE of the following lines. DO NOT remove the command.
% If you have no special notice, KEEP empty braces:
%\printAffiliationsAndNotice{}  % no special notice (required even if empty)
% Or, if applicable, use the standard equal contribution text:
\printAffiliationsAndNotice{\icmlEqualContribution}

\input{sec/0_abstract} 
\input{sec/1_intro}

\input{sec/2_related_works}

\input{sec/3_methodology}

\input{sec/4_experiments}

\input{sec/5_conclusion}

\section*{Impact Statement}
This paper presents work in the field of Machine Learning. There are many potential societal consequences of our work, none of which we feel must be highlighted here.

% In the unusual situation where you want a paper to appear in the
% references without citing it in the main text, use \nocite
\nocite{langley00}

\bibliography{main}
\bibliographystyle{icml2026}

%%%%%%%%%%%%%%%%%%%%%%%%%%%%%%%%%%%%%%%%%%%%%%%%%%%%%%%%%%%%%%%%%%%%%%%%%%%%%%%
%%%%%%%%%%%%%%%%%%%%%%%%%%%%%%%%%%%%%%%%%%%%%%%%%%%%%%%%%%%%%%%%%%%%%%%%%%%%%%%
% APPENDIX
%%%%%%%%%%%%%%%%%%%%%%%%%%%%%%%%%%%%%%%%%%%%%%%%%%%%%%%%%%%%%%%%%%%%%%%%%%%%%%%
%%%%%%%%%%%%%%%%%%%%%%%%%%%%%%%%%%%%%%%%%%%%%%%%%%%%%%%%%%%%%%%%%%%%%%%%%%%%%%%
\newpage
\appendix
\onecolumn
\input{sec/X_suppl}

%%%%%%%%%%%%%%%%%%%%%%%%%%%%%%%%%%%%%%%%%%%%%%%%%%%%%%%%%%%%%%%%%%%%%%%%%%%%%%%
%%%%%%%%%%%%%%%%%%%%%%%%%%%%%%%%%%%%%%%%%%%%%%%%%%%%%%%%%%%%%%%%%%%%%%%%%%%%%%%

\end{document}

%% file: sec/0_abstract.tex
\begin{abstract}

Contrastive vision-language models like CLIP exhibit remarkable zero-shot generalization. However, prompt tuning remains highly sensitive to label noise, as mislabeled samples generate disproportionately large gradients that can overwhelm pre-trained priors. 
We argue that because CLIP already provides a near-optimal initialization, adaptation should be inherently conservative, particularly against the extreme gradient updates common in noisy settings. 
To this end, we propose Double-Softmax Prompt Tuning (DSPT), a hyperparameter-free method for intrinsic gradient suppression. By applying a sequential probabilistic normalization, DSPT induces a self-adaptive saturation zone that suppresses gradients from high-error noisy samples while maintaining informative updates. 
We also provide both theoretical analysis and empirical evidence about how this mechanism achieves adaptive suppression.
This design transforms ``gradient vanishing'', traditionally a training bottleneck, into a principled noise-filtering shield for label-noise prompt tuning. Extensive experiments confirm that this simple, drop-in design achieves state-of-the-art robustness across various noisy benchmarks, outperforming methods with complex architectures and handcrafted hyperparameters.

% Contrastive vision-language models like CLIP exhibit remarkable zero-shot generalization. However, prompt tuning remains highly sensitive to label noise, as mislabeled samples generate disproportionately large gradients that can overwhelm pre-trained priors.

% We argue that because CLIP provides a near-optimal initialization, adaptation should be inherently conservative, specifically resisting the extreme gradient updates common in noisy environments. 
% To this end, we propose Double-Softmax Prompt Tuning (DSPT), a hyperparameter-free method for intrinsic gradient suppression. By applying a sequential probabilistic normalization, DSPT induces a self-adaptive saturation zone that filters out gradients from high-error noisy samples while preserving informative updates.

% Crucially, we provide both theoretical analysis and extensive empirical evidence to demonstrate how this mechanism achieves adaptive suppression. This design transforms "gradient vanishing"—traditionally a training bottleneck—into a principled noise-filtering shield. Extensive experiments confirm that this simple, drop-in design achieves state-of-the-art robustness across various noisy benchmarks, outperforming complex architectures and methods requiring handcrafted hyperparameters.
\end{abstract}

%% file: sec/1_intro.tex
\section{Introduction}
\label{sec:intro}

%
% With the advancement of modern neural networks, vision-language models (VLMs) have demonstrated remarkable learning capabilities and adaptability across a wide range of tasks. Take CLIP as an example: it begins by collecting captioned image samples and constructing image-text pairs. Separate encoders are then introduced to extract visual and textual features. Contrastive learning is applied to maximize the similarity between relevant image-text pairs and minimize the similarity between irrelevant ones as encoders optimization. These powerful pre-trained models can perform various downstream tasks directly, such as zero-shot classification. In this process, the similarity between unlabeled images and predefined classes is calculated as CLIP by formulating task-specific prmpt for class name such as “a picture of a [CLS]”. The image is classified into the most similar class based on this comparison. Additionally, pre-trained VLMs can be fine-tuned for specific tasks by conducting further training on task-specific datasets, making them adaptable to different scenarios. For instance, prompt tuning adjusts the prompts as learnable parameters while keeping the model backbone frozen throughout the tuning process, generating task-specific prompts with minimal effort, such as fine-tuning prompts to focus more on concepts related to cars in a car classification task.

\begin{figure}[t!]    %
\hspace{-4mm}
  \centering            % 
  \subfloat[Caltech101-Sym noise]   % 
  {
      \label{fig:subfig1}\includegraphics[width=0.24\textwidth]{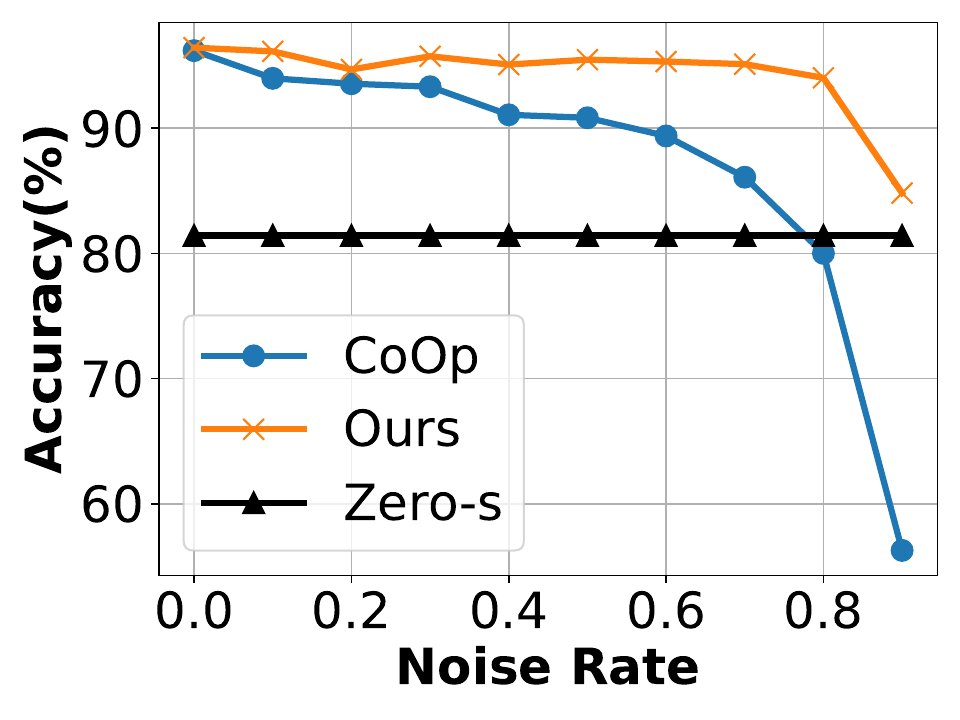}
  }\hspace{-3mm}
  \subfloat[Caltech101-Pair noise]
  {
      \label{fig:subfig2}\includegraphics[width=0.24\textwidth]{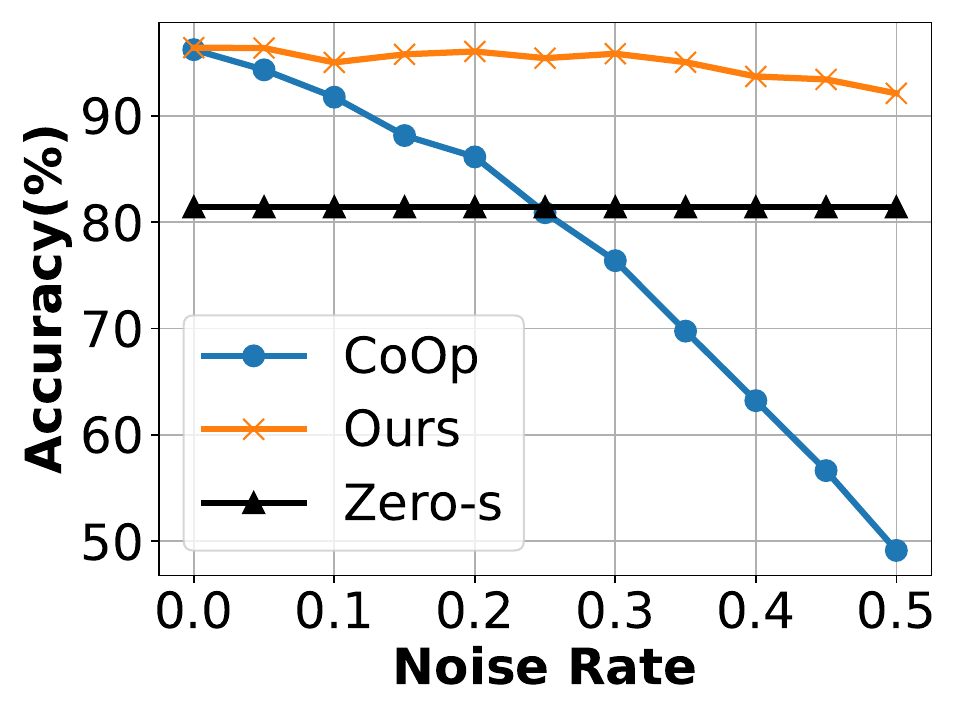}
  }

\hspace{-4mm}  
  \subfloat[OxfordPets-Sym noise]   % 
  {
      \label{fig:subfig1}\includegraphics[width=0.24\textwidth]{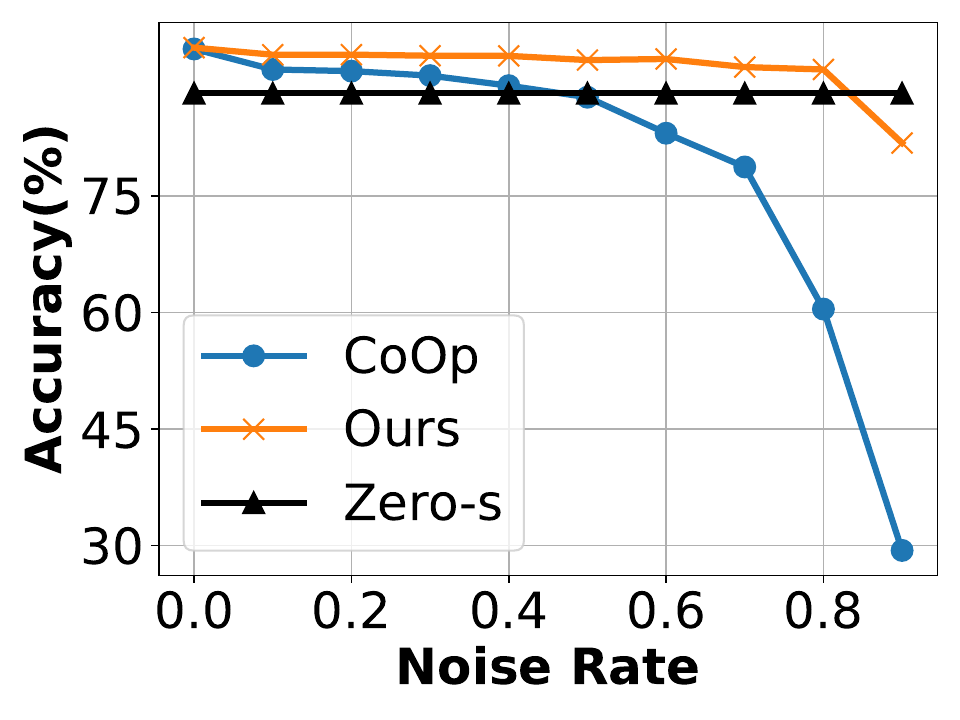}
  }\hspace{-3mm}
  \subfloat[OxfordPets-Pair noise]
  {
      \label{fig:subfig2}\includegraphics[width=0.24\textwidth]{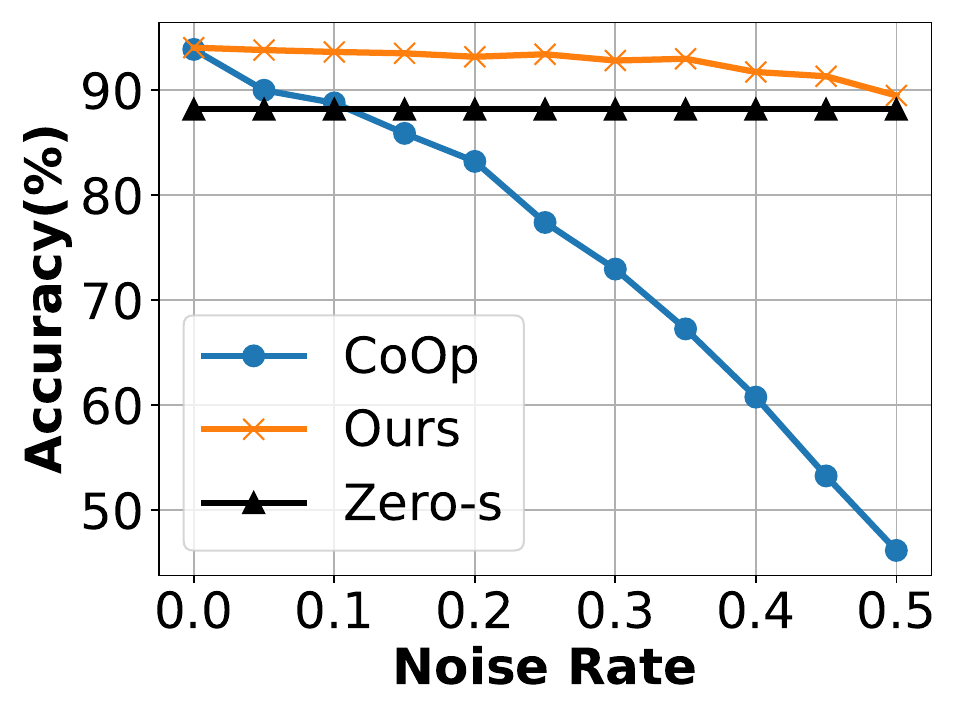}
  }
  
  \caption{Accuracy curve of prompt-tuning(CoOp), our method, and zero-shot predictions with increasing noise rate under different settings. CoOp suffers from a significant performance drop and falls below zero-shot predictions, while our double-softmax cross-entropy loss yields consistent noise robustness.}    % 
  \vspace{-8pt}
  \label{fig:1}            % 

\end{figure}

\begin{figure*}[t!]
\centering

\scalebox{0.53}{

\includegraphics{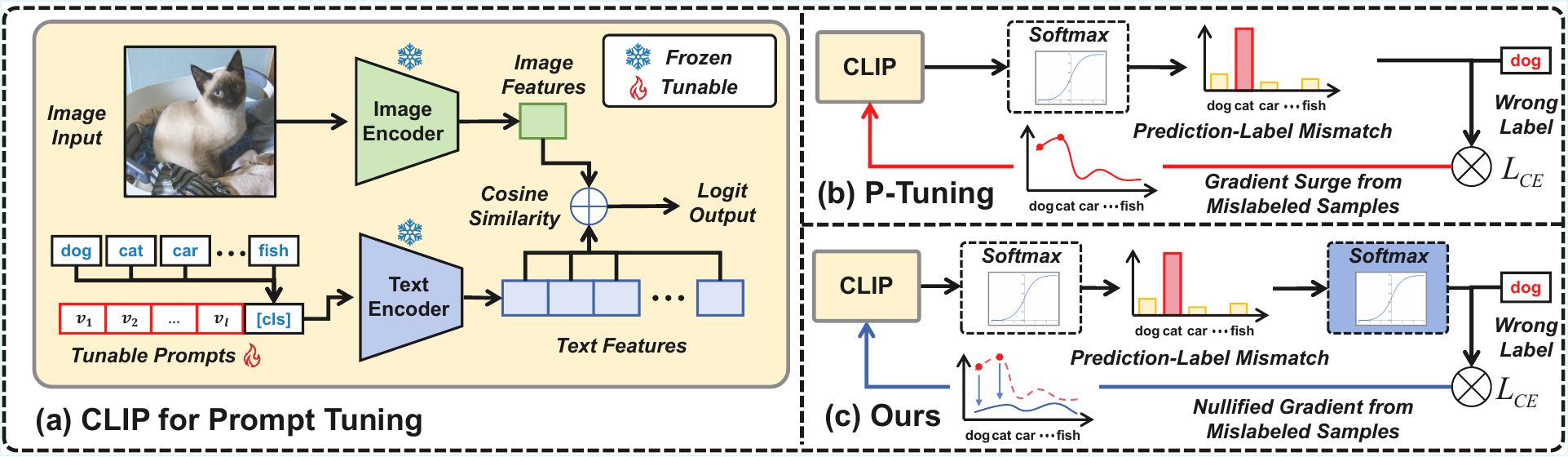}
}

\caption{Overview of the proposed method. (a) CLIP backbone model for prompt-tuning. (b) We observe that prompt-tuning generates gradient surges in samples with confident predictions inconsistent with noisy labels with respect to output logit terms, which are likely to be mislabeled and will therefore harm the training process. (c) Our double-softmax cross-entropy loss nullifies the influence of mismatch samples by suppresing gradient, which mitigates overall overfitting.}
\label{fig1}
\end{figure*}

Contrastively trained large-scale vision–language models, such as CLIP~\cite{clip}, align images and captions in a shared embedding space by pulling matched image–caption pairs together and pushing negatives apart.
This alignment enables zero-shot image classification, where class names are inserted into prompts (e.g., ``a photo of a [\texttt{class}]'') and the model can predict the class whose prompted text embedding is most similar to the image embedding without training on the target images.
To improve classification performance while mitigating overfitting, CLIP-specific prompt tuning~\cite{coop} keeps the backbone frozen and learns a set of continuous tokens, [$v_i$], $i\!=\!1,2,\ldots,l$, that are concatenated to the class name (e.g., ``[$v_1$] [$v_2$] $\ldots$ [$v_l$] [\texttt{class}]''), producing task-specific textual prototypes better aligned with the target images.
For example, in fine-grained car classification, the learned prompts can implicitly emphasize car-specific attributes (e.g., grille, headlight shape), thereby improving image–text alignment and classification accuracy.

However, the presence of label noise severely disrupts this adaptation. In standard prompt tuning, the cross-entropy objective generates disproportionately large gradients for mislabeled samples, allowing them to dominate the adaptation updates. For example, when an image of a ``car'' is mislabeled as a ``dog'', CLIP’s strong prior correctly identifies the mismatch, resulting in a near-zero prediction probability for the noisy label. Paradoxically, since the gradient magnitude of the cross-entropy loss is proportional to the residual between the ground truth label and the prediction, the more ``incorrect'' a label appears to the pre-trained prior, the larger the gradient surge it produces. These surges aggressively overrule the pre-trained priors, forcing the model to rapidly fit errors and leading to progressive performance degradation. As shown in \cref{fig:1}, the model eventually falls below the zero-shot baseline, rendering vanilla prompt tuning detrimental. This creates a fundamental conflict between CLIP’s reliable prior and label noise, motivating a more principled denoising design for prompt tuning.

Given that CLIP already provides a near-optimal starting point, we argue that the adaptation process should be inherently conservative, prioritizing the preservation of original knowledge over aggressive reconfiguration. In this paper, we propose Double-Softmax Prompt Tuning (DSPT), a hyperparameter-free framework designed for intrinsic gradient suppression for pre-trained models' label-noise prompt tuning. 
Our core insight is that for pre-trained models like CLIP, ``gradient vanishing'', traditionally a critical defect in training from scratch, can be re-purposed as a principled noise-filtering shield. 
By applying two sequential probabilistic normalizations, DSPT induces a self-adaptive saturation zone that targets the extreme gradients produced by mislabeled samples. Specifically, for highly unreliable noisy samples, the double-softmax pushes logits into the saturation region, effectively ``zeroing out'' their high-magnitude gradients. Conversely, for informative samples, those where the model is uncertain but the signals are potentially reliable, the gradients remain within the active learning zone. While these informative gradients are also slightly tempered, they are permitted to pass through, enabling the subtle refinements necessary for robust CLIP-based prompt tuning. More details are in the theoretical analysis and Figure~\ref{fig:grad_dis}.

Different from current denoising methods, such as LogitNorm \cite{logitnorm} or LogitCLIP \cite{logitclip}, which often rely on complex, handcrafted hyperparameters for logit clamping, DSPT offers a more elegant and robust solution.
Existing methods typically employ static geometric constraints or global logit scaling that fail to distinguish between samples adaptively. Furthermore, they often require extensive tuning of hyperparameters that are highly sensitive to specific noise rates and datasets. In contrast, DSPT is an extremely simple, drop-in design that requires no hyperparameter tuning. Through in-depth experimental and theoretical analysis, we demonstrate that our intrinsic gradient suppression is far more effective for CLIP-based adaptation than complex, hand-designed corrections. Extensive experiments across various benchmarks confirm that DSPT consistently achieves state-of-the-art robustness, providing a simple yet powerful direction for label-noise prompt tuning in vision–language models

Our main contributions can be summarized as follows:
\begin{itemize}
\item We provide a systematic analysis of CLIP's gradient dynamics under label noise. We reveal a gradient surge paradox, where CLIP possesses a strong pre-trained prior and will produce disproportionately large gradients for ``incorrect'' noisy samples, which aggressively overrule the original knowledge.
\item We propose Double-Softmax Prompt Tuning (DSPT), a hyperparameter-free framework that re-purposes ``gradient vanishing'' as a principled noise-filtering shield. By inducing a self-adaptive saturation zone, DSPT acts as an intrinsic gradient gatekeeper that zeros out disruptive surges from noise while selectively permitting informative updates from reliable samples.
\item We provide theoretical evidence and conduct extensive experiments across diverse benchmarks. Our results show that this simple, drop-in design consistently achieves state-of-the-art robustness without requiring complex architecture or dataset-specific hyperparameter tuning, offering a new direction for robust vision-language model adaptation.
\end{itemize}

%% file: sec/2_related_works.tex
\section{Related Works}
\label{sec:relwork}

\subsection{Prompt Tuning for Vision-Language Models}
Pre-trained vision-language models such as CLIP~\cite{clip} have achieved promising results by aligning visual and linguistic features with initially hand-crafted prompts. However, identifying suitable prompts for specific downstream tasks remains a labor-intensive and non-trivial challenge. To address this issue, CoOp~\cite{coop} has been proposed as an automatic prompt generator. CoOp model initializes continuous prompt vectors randomly and treats them as learnable parameters, which are then optimized using a few-shot training dataset. To improve CoOp's generalization ability on unseen categories, CoCoOp~\cite{cocoop} introduces a small neural network that extracts visual information from image features and integrates it into the prompt, creating instance-dependent context for each image. BLIP~\cite{blip} introduces a unified encoder-decoder architecture that can easily transfer across different tasks. It also proposes a data augmentation approach that enhances the diversity of training data, improving model robustness. MaPLe~\cite{maple} designs a coupling function that generates visual prompts for the vision encoder from textual prompts. UPL~\cite{upl} introduces unsupervised prompt tuning for VLMs by assigning pseudo-labels to unlabeled data, which are generated through the model's own zero-shot predictions. This method is further enhanced by Robust UPL~\cite{rupl}, which incorporates random sample training and applies Generalized Cross Entropy~\cite{gce} loss. 

\subsection{Learning with Noisy labels}

Deep neural networks (DNNs) have been extensively studied and widely applied due to their powerful learning capabilities. However, this strength also makes them susceptible to training data with incorrect labels, leading to overfitting and performance degradation. Learning with noisy labels, or noisy label learning (NLL), focuses on mitigating the negative effects of label noise while preserving useful information during training. Common NLL approaches include selecting samples with correct labels~\cite{dividemix,mwnet, adaptivess, pico+,  metalearn}, correcting loss using intermediate information ~\cite{errorbound, dualt,anchorp}, designing noise-robust model architectures \cite{ bootstrap, smoothing, earlylearn, seal}, and revising loss functions to be more resistant to noise ~\cite{robustlf, gce, nce, asymloss}. Notably, LogitClip~\cite{logitclip} and LogitNorm~\cite{logitnorm} introduce robust loss functions that constrain the magnitude of the logit vector, aiding in noisy label learning and out-of-distribution detection.

To study the effect of label noise in VLM fine-tuning, Wu et al.~\cite{rupl} conduct experiments testing various fine-tuning methods with the presence of noisy labels, drawing the conclusion that prompt tuning is more noise-resistant compared to other fine-tuning strategies. Inspired by Wu et al., JoAPR~\cite{JoAPR} proposes a label-noise prompt-tuning approach by modeling the loss of training data with Gaussian Mixture Model(GMM) to form a self-adaptive threshold that distinguishes clean and noisy samples. Label correction is subsequently applied to mislabeled data through data augmentation and mixup~\cite{mixup}. Fang et. al~\cite{srr} also utilize GMM-based sample selection for prompt tuning, while adopting class-specific prompts for label rectification. However, these two methods both introduce complicated mechanisms and involve time-consuming algorithms (retraining and BLIP model, respectively). NLPrompt~\cite{nlprompt} employs optimal transport to formulate pseudo label for sample selection, and introduces MAE loss for low confidence data.

%% file: sec/3_methodology.tex
\theoremstyle{plain}

\section{Methodology}
\label{sec:relwork}

\subsection{Preliminaries}
\noindent\textbf{CLIP-Based Prompt Tuning}. Rather than relying on hand-crafted textual templates, prompt tuning learns a sequence of continuous prompt embeddings. For example, CoOp~\cite{coop} introduces a shared learnable prompt for all classes, parameterized as $\bm{v}_\theta \!=\! [\bm{v}_1, \bm{v}_2, \ldots, \bm{v}_l]$ of length $l$, which is concatenated to each class embedding $\bm{e}_c$. Given the image classification dataset $\mathcal{D}\!=\!\{(\bm{x}_i,y_i)\}_{i=1}^n$, where $y_i\!\in\! \{1,\ldots,C\}$ is the ground-truth index, and a pre-trained CLIP model $f\!=\!\{f_{\text{img}}, f_\text{text}\}$, the objective for prompt tuning can be written as the standard cross-entropy loss:
\begin{equation}
    \label{prompt_tuning}
    \begin{aligned}
        \bm{z}_{i,c} &= f_{\text{img}}(\bm{x}_i)\cdot f_\text{text}([\bm{v}_\theta;\bm{e}_{c}]), \\
            \mathcal{L}_{\text{pt}} &=
    -\frac{1}{n}\sum_{i=1}^n \log \bigl(\text{softmax}(\bm{z}_i)_{y_i}\bigr) 
    \end{aligned}
\end{equation}
where $[\bm{v}_\theta;\bm{e}_{c}]$ is the concatenated embedding for class $c$, $\cdot$ is the inner product, $\bm{z}_i$ is the logits over $C$ classes for sample $i$, and $\text{softmax}(\bm{z}_i)_{y_i} \!=\! \frac{\exp(\bm{z}_{i,y_i})}{\sum_{c} \exp(\bm{z}_{i,c})}$ denotes the predicted probability for the ground-truth class $y_i$.

\vspace{0.3cm}
\noindent\textbf{Noisy Label Classification}. 
Consider the same classification task under noisy labels. 
Let $\tilde{\mathcal{D}} \!=\! \{(\bm{x}_i, \tilde{y}_i)\}_{i=1}^n$ denote the corrupted training set, where $\tilde{y}_i \!\in\! \{1,\ldots,C\}$ is a potentially incorrect label obtained from the clean label $y_i$. We introduce the corruption process characterized by a transition matrix $T \in \mathbb{R}^{C \times C}$ with entries:
\begin{equation}
    T_{jk} = \Pr(\tilde{y}_i = j \mid y_i = k),
\end{equation}
so that the distribution of the noisy label depends only on the true class and is independent of $\bm{x}_i$. 

Instance-independent label noise is typically instantiated in two forms:  
(1) \emph{Symmetric noise}, where each true label can be flipped to any other class with equal probability: $\forall k, T_{kk} \!=\! 1 - \eta, T_{jk} \!=\! \frac{\eta}{C - 1},\; j \neq k$, where $\eta \!\in\! [0,1]$ is the noise rate.
(2) \emph{Asymmetric noise}, where the corruption is biased toward semantically similar classes. In practice, we adopt the common pair-flip model to simulate asymmetric noise, where each class can only be flipped to one particular other class: $\forall k, T_{kk} \!=\! 1 - \eta, \exists\, j \neq k, T_{jk} \!=\! \eta$, and all remaining off-diagonal entries are zero.

Formally, the goal in CLIP-based noisy label classification is to fine-tune the pre-trained CLIP on the noisy dataset $\tilde{\mathcal{D}}$ to outperform its zero-shot counterpart.
As discussed above, prompt tuning provides some robustness to label noise, but its performance still degrades severely under high noise rates, motivating a more principled denoising method.

\subsection{Our Method}

Motivated by the insight that CLIP's strong priors cause noisy labels to generate disproportionately large gradients, we propose the Double-Softmax Prompt Tuning (DSPT) method for CLIP-based noisy prompt tuning:
\begin{equation}
\label{eq:ours}
    \mathcal{L}_{\text{ours}}
    = -\frac{1}{n}\sum_{i=1}^n 
    \log \bigl(\text{softmax}(\text{softmax}(\bm{z}_i))_{\tilde{y}_i}\bigr),
\end{equation}
where $\bm{z}_i$ denotes prompt-tuned logits and $\tilde{y}_i$ denotes the noisy label. Compared to the standard objective in Eq.~\eqref{prompt_tuning}, our formulation introduces a sequential softmax normalization prior to the final prediction. 
Despite this minimalist design, the method induces an intrinsic gradient suppression mechanism, which adaptively targets and suppresses the extreme gradient surges typical of mislabeled samples, while allowing informative samples within the active learning zone to propagate.

In the following sections, we provide theoretical and empirical analyses demonstrating that this simple formulation effectively induces a gradient saturation zone, acting as an intrinsic filter against label noise.

\subsection{Theoretical Analysis}
% Cross-entropy loss has been widely applied in classification tasks, However, Ghosh et al. \cite{mae} point out that being an asymmetric loss function, cross-entropy loss has less noise robustness compared to symmetric losses, such as MAE loss ~\cite{mae}. In addition, the range of cross-entropy loss is unbounded, which will further lead to the gradient of:

Cross-entropy loss is widely applied for classification, but it is known to be fragile under label noise. Ghosh et al.~\cite{mae} show that, as an asymmetric loss, cross-entropy is less noise-robust than symmetric losses such as MAE. Moreover, for a model $f_\theta$, the gradient of the per-sample cross-entropy loss with respect to the logit $\bm{z}$ is:
\begin{equation}
    \frac{\partial \mathcal{L}_{\text{pt}}(f_\theta(\bm{x}), \tilde{y})}{\partial \bm{z}_i} = \bm{p}_i - \delta_{\tilde{y}i}
    \label{eq_sin}
\end{equation}
where $\bm{p} = softmax(\bm{z})$ and $\delta_{ij}$ is the kronecker delta. For a model that gives inhomogeneous prediction on a noisy sample, in other words, $argmax_i(\bm{z}_i) \neq \tilde{y}$, cross-entropy loss accumulates large gradients on $\bm{z}_i$ and $\bm{z}_{\tilde{y}}$, causing major disturbance for model training.  This effect is amplified for high-confidence pre-trained vision-language models in prompt tuning, where large logits produce extremely sharp distributions, so mismatched samples induce larger gradients than correctly labeled samples. Since pre-trained vision-language models can give relatively accurate predictions, these samples are likely to be mislabeled and will mislead the model in the training process.

%whose magnitude scales inversely with the predicted probability $\text{softmax}_y(\bm{z})$ of the target class. When this probability is small, as in hard or mislabeled samples, the factor $1/\text{softmax}_y(\bm{z})$ can become very large, leading to dominant gradients. This effect is amplified for high-confidence pre-trained vision--language models in prompt tuning, where large logits produce extremely sharp distributions, so low-confidence samples induce large gradients, while correctly labeled samples contribute relatively small updates.

In the following, we show that the proposed double-softmax prompt-tuning mechanism can restrict the overall gradient propagation to reduce overfitting and nullify the learning process of noisy samples. The proofs for all propositions and theorems can be found in the appendix.

% where $f_\theta$ is the classification model with adjustable parameter $\theta$. This will lead to large loss values for hard samples and samples with noisy labels, especially accompanied by high-certainty models, such as pre-trained VLMs in prompt-tuning. With large logit output, the prediction distribution of VLM becomes extremely sharp, with the highest class probability near 1 and others approximately being 0, resulting in high gradient values for low-confidence samples, and relatively small gradient for correctly labeled samples. 

% In the following analysis, we demonstrate that our double-softmax mechanism can balance per-sample loss and provide better performance.
\begin{proposition}
\label{pos0}
let the double softmax cross-entropy loss for a paticular sample be $\mathcal{L}_{\text{ours}} = -\log (\bm{q}_{\tilde{y}}),$ where $\bm{q} = softmax(\bm{p})$ ,$\bm{p} = softmax(\bm{z})$, and $\bm{z}$ is the VLM's output logits. Then, the gradient of the  loss with respect to $\bm{z}$ is:
\[
    \frac{\partial \mathcal{L}_{ours}}{\partial z_i} = \bm{p}_i \left[ (\bm{q}_i - \delta_{\tilde{y}i}) + (\bm{p}_{\tilde{y}} - \sum_j \bm{p}_j \bm{q}_j) \right]
    \label{eq:placeholder_label}
\]   
\end{proposition}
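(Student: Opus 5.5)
The proof is a two-stage chain rule through the composition $\bm{z}\mapsto\bm{p}=\text{softmax}(\bm{z})\mapsto\bm{q}=\text{softmax}(\bm{p})\mapsto-\log\bm{q}_{\tilde{y}}$. We treat $\bm{p}$ as an intermediate vector of "logits" for the outer softmax and reuse the standard cross-entropy gradient from Eq.~\eqref{eq_sin}. We then multiply by the Jacobian of the inner softmax and simplify.

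\textbf{Step 1 (outer layer).} Since $\mathcal{L}_{\text{ours}}=-\log\bigl(\text{softmax}(\bm{p})_{\tilde{y}}\bigr)$ is exactly the cross-entropy of Eq.~\eqref{eq_sin} with $\bm{z}$ replaced by $\bm{p}$, we obtain directly
\[
\frac{\partial \mathcal{L}_{\text{ours}}}{\partial \bm{p}_j}=\bm{q}_j-\delta_{\tilde{y}j}.
\]
If one wants this self-contained, it follows from $-\log \bm{q}_{\tilde{y}}=-\bm{p}_{\tilde{y}}+\log\sum_k e^{\bm{p}_k}$.

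\textbf{Step 2 (inner layer).} We use the softmax Jacobian
\[
\frac{\partial \bm{p}_j}{\partial z_i}=\bm{p}_j(\delta_{ij}-\bm{p}_i),
\]
obtained by differentiating $\bm{p}_j=e^{z_j}/\sum_k e^{z_k}$. The chain rule then gives
\[
\frac{\partial \mathcal{L}_{\text{ours}}}{\partial z_i}=\sum_j(\bm{q}_j-\delta_{\tilde{y}j})\,\bm{p}_j(\delta_{ij}-\bm{p}_i).
\]

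\textbf{Step 3 (simplification).} We split the sum according to the factor $(\delta_{ij}-\bm{p}_i)$. The $\delta_{ij}$ part collapses to $\bm{p}_i(\bm{q}_i-\delta_{\tilde{y}i})$. The $-\bm{p}_i$ part equals $-\bm{p}_i\sum_j\bm{p}_j(\bm{q}_j-\delta_{\tilde{y}j})$, which is $-\bm{p}_i\bigl(\sum_j\bm{p}_j\bm{q}_j-\bm{p}_{\tilde{y}}\bigr)$. Factoring out $\bm{p}_i$ yields
\[
\bm{p}_i\Bigl[(\bm{q}_i-\delta_{\tilde{y}i})+\bigl(\bm{p}_{\tilde{y}}-\textstyle\sum_j\bm{p}_j\bm{q}_j\bigr)\Bigr],
\]
which is the claimed expression.

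There is no real obstacle; the argument is a routine computation. The only point needing care is the bookkeeping in Step 3, namely keeping the sign of the $\delta_{\tilde{y}j}$ term correct when it is contracted against $\bm{p}_j$. That contraction is what produces $+\bm{p}_{\tilde{y}}$ inside the bracket, and I will double-check it.
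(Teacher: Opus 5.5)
Your proposal is correct and follows essentially the same route as the paper: a chain rule through both softmax layers using the Jacobian $\partial \bm{s}_m/\partial \bm{a}_n = \bm{s}_m(\delta_{mn}-\bm{s}_n)$, then the same split of the sum into its $\delta_{ki}$ and $-\bm{p}_i$ parts. The only cosmetic difference is that you collapse the outer layer directly to the cross-entropy gradient $\bm{q}_j-\delta_{\tilde{y}j}$. The paper instead obtains that factor by multiplying $-1/\bm{q}_{\tilde{y}}$ by the outer softmax Jacobian.
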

The First term $\bm{p}_i$ in the above equation restricts the absolute value of the overall gradients across all logit terms to be not greater than 1, suppressing the general prompt updates to achieve conservative adaptation for VLM prompts. The first term in the square brackets: $(\bm{q}_i - \delta_{yi})$ is the softened alignment signal, which enables the model to learn from supervision information with smoother curves. The second term: $(\bm{p}_y - \sum_j \bm{p}_j \bm{q}_j)$ acts as a balancer for the confidence of the model that measures the difference between the on the noisy label and the weighted average confidence across all classes.

%The $\bm{p}_i$ in the above equation is the single-softmaxed logit output of the VLM model, which restricts the absolute value of the overall gradients across all logit terms to be not greater than 1, suppressing the prompt updates. The first term in the square brackets: $(\bm{q}_i - \delta_{yi})$ is the softened alignment signal similar to \cref{eq_sin} with double-softmaxed output probability, enabling the model to learn from supervision information with smoother curves. The second term: $(\bm{p}_y - \sum_j \bm{p}_j \bm{q}_j)$ acts as a confidence balancer for the confidence of the model, where $\bm{p}_y$ is the single-softmaxed confidence of the VLM's on the noisy label $\tilde{y}$ and $\sum_j \bm{p}_j \bm{q}_j$ is the weighted constant confidence across all classes. If the target label $\tilde{y}$ is more probable than the average confidence, this term will be positive, which balances the negative $(\bm{q}_{\tilde{y}} - 1)$ term for the target class ($i=\tilde{y}$), allowing for consisitant performance as single-softmax in \cref{eq_sin} for high confidence samples.

Further studies on the double-softmax cross-entropy gradient in \cref{pos0} reveal its gradient nullification mechanism under mislabeled data for high-confidence prediction.

\begin{theorem}
\label{the0}
For high confidence single-softmax prediction inconsistent with noisy label $y$, i.e., $\exists \hat{y} \neq \tilde{y}, \bm{p}_{\hat{y}} \rightarrow 1$, its total gradient quantity under double-softmax cross-entropy loss with respect to logit $\bm{z}$ will converge to 0:

\[ \lim_{\bm{p}_{\hat{y}} \rightarrow 1, \hat{y}\neq \tilde{y}} \sum_i \left| \frac{\partial \mathcal{L}_{ours}}{\partial z_i} \right| = 0\] 

\end{theorem}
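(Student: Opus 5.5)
The plan is to read the gradient off \cref{pos0} and treat it as a continuous function of the single-softmax vector $\bm{p}$ alone. Note that $\bm{q}=\text{softmax}(\bm{p})$ depends only on $\bm{p}$. The hypothesis $\bm{p}_{\hat{y}}\rightarrow 1$ with $\bm{p}$ in the probability simplex forces $\bm{p}_j\rightarrow 0$ for every $j\neq\hat{y}$, so $\bm{p}\rightarrow \bm{e}_{\hat{y}}$, the one-hot vector at $\hat{y}$. Since $\sum_i |\partial\mathcal{L}_{ours}/\partial z_i|$ is a finite sum of continuous functions of $\bm{p}$, it suffices to show that each summand tends to $0$. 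I will split the indices into $i\neq\hat{y}$ and $i=\hat{y}$.

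\textbf{Case $i\neq\hat{y}$.} I will first bound the bracket in \cref{pos0} uniformly. We have $\bm{q}_i\in(0,1)$, so $|\bm{q}_i-\delta_{\tilde{y}i}|\le 1$. Also $\bm{p}_{\tilde{y}}\in[0,1]$, and $\sum_j \bm{p}_j\bm{q}_j\in(0,1)$ because it is a convex combination of the $\bm{q}_j$; hence $|\bm{p}_{\tilde{y}}-\sum_j\bm{p}_j\bm{q}_j|\le 1$. The bracket is therefore bounded by $2$ in absolute value, and so $|\partial\mathcal{L}_{ours}/\partial z_i|\le 2\bm{p}_i\rightarrow 0$.

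\textbf{Case $i=\hat{y}$.} This is the only nontrivial step, because the prefactor $\bm{p}_{\hat{y}}$ tends to $1$ rather than $0$. I therefore need to show that the bracket itself vanishes. Since $\hat{y}\neq\tilde{y}$, we have $\delta_{\tilde{y}\hat{y}}=0$, and the bracket equals $\bm{q}_{\hat{y}}+\bm{p}_{\tilde{y}}-\sum_j\bm{p}_j\bm{q}_j$. I will take the limit term by term.
\begin{itemize}
\item $\bm{p}_{\tilde{y}}\rightarrow 0$, since $\tilde{y}\neq\hat{y}$.
\item By continuity of softmax, $\bm{q}\rightarrow\text{softmax}(\bm{e}_{\hat{y}})$. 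In particular $\bm{q}_{\hat{y}}\rightarrow e/(e+C-1)$ and $\bm{q}_j\rightarrow 1/(e+C-1)$ for $j\neq\hat{y}$; the exact values are not needed.
\item $\sum_j\bm{p}_j\bm{q}_j=\bm{p}_{\hat{y}}\bm{q}_{\hat{y}}+\sum_{j\neq\hat{y}}\bm{p}_j\bm{q}_j$. Here $\bm{q}_j\le 1$ and $\bm{p}_j\rightarrow 0$ for $j\neq\hat{y}$, so this sum tends to the same limit as $\bm{q}_{\hat{y}}$.
\end{itemize}
The two $\bm{q}_{\hat{y}}$ contributions cancel, so the bracket tends to $0$, and its product with $\bm{p}_{\hat{y}}\le 1$ also tends to $0$.

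Summing the finitely many vanishing terms gives the claimed limit. The key point to make explicit in the write-up is that the cancellation between $\bm{q}_{\hat{y}}$ and $\sum_j\bm{p}_j\bm{q}_j$ is exact in the limit, independent of $C$ and of the limiting value of $\bm{q}$. It is this cancellation that makes the gradient on the dominant logit vanish, even though that logit's own probability does not.
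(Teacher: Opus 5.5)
Your proposal is correct and follows essentially the paper's proof. It uses the same split into $i\neq\hat{y}$, where the bracket is bounded by $2$ and the prefactor $\bm{p}_i\to 0$, and $i=\hat{y}$, where $\bm{q}_{\hat{y}}$ cancels against $\bm{p}_{\hat{y}}\bm{q}_{\hat{y}}$ inside $\sum_j \bm{p}_j\bm{q}_j$; the paper merely makes this quantitative by setting $\bm{p}_{\hat{y}}=1-\delta$ and deriving $\sum_i |\partial\mathcal{L}_{ours}/\partial z_i|\le 5\delta$, whereas you pass to the limit via continuity.
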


\cref{the0} reveals that the double-softmax cross-entropy loss can effectively suppress the gradient from samples with predictions that are highly confident, but discordant with the noisy label. For VLMs with strong prior knowledge, these excluded samples are likely to be mislabeled, which would have harmed the prompt tuning process. As double-softmax cross-entropy shares the same property of producing small gradients for prediction highly accordant with noisy labels as in \cref{eq_sin}, our method encourages the VLM prompts to learn from ambiguous samples that receive low confidence predictions from the model, magnifying the memorization of novel samples absent in the prior knowledge. By this means, our double-softmax cross-entropy achieves selective gradient suppression self-adaptively.
% \begin{proposition}[Lower and Upper Bound of Cross-Entropy Loss with Double-Softmax]
\begin{proposition}
\label{pos1}
In a $C$ class classification problem, for any classification model $f$, its double-softmax cross-entropy loss defined in \cref{eq:ours} is bounded by:
\[\log(1 + \frac{C - 1}{e}) \leq \mathcal{L}_{\text{ours}} \leq \log(e+C-1) \]
\end{proposition}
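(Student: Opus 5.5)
The plan is to write the per-sample loss explicitly and reduce the bound to an elementary optimization over the probability simplex. With $\bm{p}=\text{softmax}(\bm{z})$ and $\bm{q}=\text{softmax}(\bm{p})$, the per-sample loss is
\[
\mathcal{L}_{\text{ours}} = -\log \bm{q}_{\tilde{y}} = \log\Bigl(\sum_{j=1}^C e^{\bm{p}_j}\Bigr) - \bm{p}_{\tilde{y}} = \log\Bigl(1+\sum_{j\neq \tilde{y}} e^{\bm{p}_j-\bm{p}_{\tilde{y}}}\Bigr).
\]
Since $\bm{p}$ ranges over the open simplex, it suffices to bound the function $g(\bm{p}) = \sum_{j\neq\tilde{y}} e^{\bm{p}_j - \bm{p}_{\tilde{y}}}$ over $\bm{p}\in\Delta^{C-1}$. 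Because $\log(1+\cdot)$ is monotone, a bound on $g$ for each sample transfers to the average in \cref{eq:ours}.

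\textbf{Lower bound.} Fix $t=\bm{p}_{\tilde{y}}$, so the remaining coordinates are nonnegative and sum to $1-t$. The map $u\mapsto e^u$ is convex, so the sum $\sum_{j\neq\tilde y} e^{\bm{p}_j}$ is at least $(C-1)e^{(1-t)/(C-1)}$ by Jensen's inequality. Hence
\[
g \ge (C-1)\exp\Bigl(\tfrac{1-t}{C-1}-t\Bigr) \ge (C-1)e^{-1},
\]
where the last step uses $\tfrac{1-t}{C-1}-t\ge -1$, with equality at $t=1$. This gives $\mathcal{L}_{\text{ours}}\ge \log(1+\tfrac{C-1}{e})$.

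\textbf{Upper bound.} Write $g=e^{-t}S$ with $S=\sum_{j\neq\tilde y}e^{\bm{p}_j}$. For fixed $t$, $S$ is a convex function on the face where the other coordinates sum to $1-t$. It is therefore maximized at a vertex, where all mass $1-t$ sits on one coordinate, giving $S\le e^{1-t}+(C-2)$. So $g\le e^{1-2t}+(C-2)e^{-t}$, which is decreasing in $t$ and hence maximized at $t=0$, where $g\le e+C-2$. Then $\mathcal{L}_{\text{ours}}\le\log(e+C-1)$.

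The endpoints $t=1$ and $t=0$ are attained only in the limit on the open simplex, since softmax outputs are strictly positive. The inequalities are therefore non-strict, as stated, and are asymptotically tight. The only delicate point is justifying the extremal configuration in the upper bound: convexity plus linear constraints gives the vertex maximum, and the monotonicity in $t$ follows by differentiating, since every term has a negative exponent coefficient. I expect that step to be the main obstacle, and everything else is routine.
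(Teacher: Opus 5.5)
Your proof is correct, and it reaches the same extremal picture as the paper: both bounds are approached as the first-softmax output $\bm{p}$ tends to a one-hot vector. The route is different, though, and the difference matters. I use your notation here; the paper's appendix swaps the letters $\bm{p}$ and $\bm{q}$.

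The paper argues entrywise on the second softmax. It observes that $\sum_j e^{\bm{p}_j}\le e+C-1$, with the maximum at a one-hot vector, and that $e^{\bm{p}_i}\in[1,e]$. From these it concludes $\frac{1}{e+C-1}\le\bm{q}_i\le\frac{e}{e+C-1}$ for every $i$, and reads off both loss bounds.

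The half that gives the upper bound on the loss is sound: the numerator is at least $1$ and the denominator is at most $e+C-1$. This is your vertex argument in compressed form.

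The other half, $\bm{q}_{\tilde y}\le\frac{e}{e+C-1}$, is justified in the paper only by the remark that both extremes are attained at a one-hot vector. It does not follow from bounding numerator and denominator separately, because $e+C-1$ is the \emph{maximum} of the denominator; its minimum is $Ce^{1/C}$, attained at the uniform vector. What is needed is the coupling that a large $\bm{p}_{\tilde y}$ forces the remaining entries to be small. Your reduction to $g=e^{-t}S$ with $t=\bm{p}_{\tilde y}$ held fixed supplies exactly that coupling and shows the inequality is true. So your write-up is the more rigorous of the two on the lower bound of the loss.

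The delicate direction is therefore the lower bound, not the upper-bound step you flagged as the main obstacle. Neither Jensen nor the monotonicity-in-$t$ computation is actually needed. Since $0\le\bm{p}_j\le 1$, you have $e^{-1}\le e^{-t}\le 1$. Termwise, $S\ge C-1$. By convexity of $S$ on $\{\bm{p}_j\ge 0,\ \sum_{j\neq\tilde y}\bm{p}_j\le 1\}$, whose vertices are $0$ and the unit vectors, $S\le e+C-2$. Together these give $\frac{C-1}{e}\le g\le e+C-2$ in one line, and your remark about non-attainment on the open simplex carries over unchanged.
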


 \cref{pos1} proves that double-softmax cross-entropy shares the similar property of loss boundedness of other denoising methods, such as LogitClip~\cite{logitclip}, which limits the influence of extremely hard or mislabeled samples, suppressing the tendency of overfitting.
 
 %Intuitively, both the first softmax and $\ell_2$-normalization constrain each logit, so that the log-probabilities entering the cross-entropy are bounded. The upper bound limits the influence of extremely hard or mislabeled samples, suppressing the tendency to overfit noisy labels. At the same time, the lower bound prevents the loss and gradients of high-confidence samples from vanishing too quickly, allowing the prompts to keep learning from correctly labeled data, especially in the early training stage.
% LogitClip~\cite{logitclip}, which performs l2-normalization to the output logit. 
% This proposition can be noticed intuitively, as both the first softmax operation and l2-norm restrict each element of logit $\bm{z}$, resulting in a bounded prediction distribution logarithm in cross-entropy loss. 
% This upper bound can reduce the loss of hard or noisy samples, suppressing the tendency of overfitting to noisy labels. The lower bound of our loss also contributes to noise robustness, as it increases the gradient of high-confidence samples, giving the prompts a better chance to learn correctly-labeled samples in early training stage. 

Based on \cref{pos1}, we further analyze the robustness of the double-softmax loss in terms of its generalization risk between clean and noisy distributions. Let $f$ be the classifier to be optimized by the loss function $\mathcal{L}$, 
% perform further theoretical analysis on our double-softmax cross-entropy loss of its robustness to generalize between clean and noisy distribution. Let $f$ be the classifier to be optimized by the loss function $l$, 
let $\mathcal{R}_\mathcal{L}(f) = \mathbb{E}_{(\bm{x},y)} \sim \mathcal{P}_{clean}(\mathcal{L}(f(\bm{x},y))$ be the expected risk of $f$ under clean data, $\mathcal{R}^T_\mathcal{L}(f) = \mathbb{E}_{(\bm{x},\tilde{y})} \sim \mathcal{P}_{noisy}(\mathcal{L}(f(\bm{x},\tilde{y}))$ be the expected risk of $f$ under instance-independent label noise with transition matrix $T$, and $f^\star$ and $\tilde{f}^\star$ denote the global minimizer of $\mathcal{R}_\mathcal{L}(f)$ and $\mathcal{R}^T_\mathcal{L}(f)$ respectively. 
%The objective of noisy label learning is to find a classifier $f$ supervised by noisy data and perform small expected risk $\mathcal{R}_\mathcal{L}(f)$ on clean data.

\begin{theorem}
\label{the1}
For any symmetric label noise with noise rate $\eta < 1 - \frac{1}{C}$, the expected risk difference in the clean data distribution between the clean and noisy global optimizer is bounded by :  

\[ 0 \leq \mathcal{R}_{\mathcal{L}_{\text{ours}}}(\tilde{f}^\star) - \mathcal{R}_{\mathcal{L}_{\text{ours}}}(f^\star) \leq \log(\frac{e+C-1}{1 + e^{-1} (C - 1)} )M_\eta \]
    
\noindent where $M_\eta=\frac{\eta}{1 - \eta}$.
\end{theorem}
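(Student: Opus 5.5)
The plan is to use the standard symmetric-noise decomposition of the noisy risk, as in the analyses of Ghosh et al.~\cite{mae} and LogitClip~\cite{logitclip}. The only loss-specific input will be the range of $\sum_{k}\mathcal{L}_{\text{ours}}(f(\bm{x}),k)$, which we control with \cref{pos1}.

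\textbf{Step 1: rewrite the noisy risk.} Under symmetric noise, for any fixed $\bm{x}$ with clean label $y$, the noisy label equals $y$ with probability $1-\eta$ and each $k\neq y$ with probability $\frac{\eta}{C-1}$. Define $S(\bm{x})=\sum_{k=1}^C \mathcal{L}_{\text{ours}}(f(\bm{x}),k)$. Then
\[
\mathcal{R}^T_{\mathcal{L}_{\text{ours}}}(f)=\Bigl(1-\eta-\tfrac{\eta}{C-1}\Bigr)\mathcal{R}_{\mathcal{L}_{\text{ours}}}(f)+\tfrac{\eta}{C-1}\,\mathbb{E}_{\bm{x}}[S(\bm{x})].
\]
The assumption $\eta<1-\frac1C$ is exactly what makes the coefficient $1-\frac{\eta C}{C-1}$ positive.

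\textbf{Step 2: bound $S$.} \cref{pos1} gives, for each $k$,
\[
\log\bigl(1+e^{-1}(C-1)\bigr)\le \mathcal{L}_{\text{ours}}(f(\bm{x}),k)\le \log(e+C-1).
\]
Write $a=\log(1+e^{-1}(C-1))$ and $b=\log(e+C-1)$. Summing over $k$ gives $Ca\le S\le Cb$ for every $f$ and every $\bm{x}$. The ratio inside the logarithm of the target bound is $b-a$, which confirms that \cref{pos1} is the intended ingredient.

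\textbf{Step 3: compare the two optimizers.} The lower bound $0\le \mathcal{R}(\tilde f^\star)-\mathcal{R}(f^\star)$ holds because $f^\star$ minimizes the clean risk. For the upper bound, use $\mathcal{R}^T(\tilde f^\star)\le \mathcal{R}^T(f^\star)$ and substitute Step 1:
\[
\Bigl(1-\tfrac{\eta C}{C-1}\Bigr)\bigl(\mathcal{R}(\tilde f^\star)-\mathcal{R}(f^\star)\bigr)\le \tfrac{\eta}{C-1}\,\mathbb{E}\bigl[S_{f^\star}-S_{\tilde f^\star}\bigr]\le \tfrac{\eta C}{C-1}(b-a).
\]
This yields the constant $\frac{\eta C}{(C-1)-\eta C}$ multiplying $b-a$.

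\textbf{Main obstacle: matching $M_\eta=\frac{\eta}{1-\eta}$.} Step 3 gives $\frac{\eta C}{C-1-\eta C}$, which is not $\frac{\eta}{1-\eta}$. I would first try a finer bound on $S_{f^\star}-S_{\tilde f^\star}$ that separates the clean-label term from the $C-1$ off-label terms, aiming to cancel part of the coefficient. If that fails, I would follow the LogitClip-style argument. Rather than subtracting $\mathcal{R}(f^\star)$ terms directly, it bounds
\[
\mathcal{R}(f)=\tfrac{1}{1-\eta}\Bigl(\mathcal{R}^T(f)-\tfrac{\eta}{C-1}\,\mathbb{E}\textstyle\sum_{k\neq y}\mathcal{L}(f(\bm{x}),k)\Bigr).
\]
Each of the $C-1$ off-label losses is then bounded by $a$ or $b$, so that the $C-1$ factors cancel. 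This route produces exactly $\frac{\eta}{1-\eta}(b-a)$, and I expect it to be the authors' intended derivation. I would check carefully that $\mathcal{R}^T(\tilde f^\star)\le\mathcal{R}^T(f^\star)$ is applied in the right direction in this bookkeeping, since that step is where the stated constant either goes through or does not.
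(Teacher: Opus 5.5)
Your fallback is exactly the paper's proof, and the direction you were worried about works out. Bounding the $C-1$ off-label losses in your identity gives $(1-\eta)\mathcal{R}(f)+\eta a\le\mathcal{R}^T(f)\le(1-\eta)\mathcal{R}(f)+\eta b$ for every $f$. Subtracting the lower estimate of $\mathcal{R}(f^\star)$ from the upper estimate of $\mathcal{R}(\tilde f^\star)$ then yields $\mathcal{R}(\tilde f^\star)-\mathcal{R}(f^\star)\le\frac{1}{1-\eta}\bigl(\mathcal{R}^T(\tilde f^\star)-\mathcal{R}^T(f^\star)+\eta(b-a)\bigr)\le\frac{\eta}{1-\eta}(b-a)$, since $\tilde f^\star$ minimizes the noisy risk; this needs only $\eta<1$.

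Your Steps 1--3 are valid but give the strictly weaker constant $\frac{\eta C}{C-1-\eta C}$, because folding the clean-label term into $S$ shrinks the coefficient of $\mathcal{R}$ and also charges that term's slack, so make the fallback your main line. Incidentally, $b-a=\log e=1$, so the stated bound is just $M_\eta$.
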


\begin{theorem}
\label{the2}
Under asymmetric noise with $T_{jk} \!\le\! T_{kk}, \forall j \neq k$, the expected risk in noisy data  distribution of clean and noisy global optimizer is bounded by :

\[ 0 \leq \mathcal{R}^T_{\mathcal{L}_{\text{ours}}}(f^\star) - \mathcal{R}^T_{\mathcal{L}_{\text{ours}}}(\tilde{f}^\star) \leq C \log(\frac{e+C-1}{1 + e^{-1} (C - 1)} ) P_{T}\]
    
\noindent where $P_T = \mathbb{E}_{(\bm{x},y)} \sim \mathcal{P}_{\text{clean}}(T_{kk})$ is a constant that depends on noise pattern.
\end{theorem}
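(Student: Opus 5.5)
The plan follows the classical robust-loss argument of Ghosh et al.~\cite{mae}, adapted to the bounded double-softmax loss. The only ingredient needed from earlier in the paper is \cref{pos1}. Write $a=\log(1+\frac{C-1}{e})$ and $b=\log(e+C-1)$. By \cref{pos1}, every per-sample value satisfies $a\le \mathcal{L}_{\text{ours}}(f(\bm{x}),j)\le b$ for every classifier $f$, input $\bm{x}$ and label $j$. Moreover,
\[
b-a=\log\Bigl(\frac{e+C-1}{1+e^{-1}(C-1)}\Bigr),
\]
which is exactly the logarithmic factor in the claimed bound.

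The lower bound $0\le \mathcal{R}^T_{\mathcal{L}_{\text{ours}}}(f^\star)-\mathcal{R}^T_{\mathcal{L}_{\text{ours}}}(\tilde f^\star)$ is immediate. By definition $\tilde f^\star$ is a global minimizer of $\mathcal{R}^T_{\mathcal{L}_{\text{ours}}}$, so its noisy risk is no larger than that of any other classifier, in particular $f^\star$.

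For the upper bound, I would first rewrite the noisy risk using instance-independence of the noise. Conditioning on the clean pair $(\bm{x},y)$ with $y=k$, the noisy label equals $j$ with probability $T_{jk}$. Hence
\[
\mathcal{R}^T_{\mathcal{L}_{\text{ours}}}(f)=\mathbb{E}_{(\bm{x},y)\sim\mathcal{P}_{\text{clean}}}\Bigl[\sum_{j=1}^{C} T_{jy}\,\mathcal{L}_{\text{ours}}(f(\bm{x}),j)\Bigr].
\]
Subtracting this expression for $\tilde f^\star$ from the one for $f^\star$ gives
\[
\mathcal{R}^T(f^\star)-\mathcal{R}^T(\tilde f^\star)=\mathbb{E}\Bigl[\sum_j T_{jy}\bigl(\mathcal{L}_{\text{ours}}(f^\star(\bm{x}),j)-\mathcal{L}_{\text{ours}}(\tilde f^\star(\bm{x}),j)\bigr)\Bigr].
\]
Each bracketed difference lies in $[a-b,\,b-a]$ by \cref{pos1}, and each weight $T_{jy}$ is nonnegative. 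So every summand is at most $T_{jy}(b-a)$. Next I invoke the hypothesis $T_{jk}\le T_{kk}$ for $j\neq k$, which gives $T_{jy}\le T_{yy}$ for every $j$. Summing over the $C$ labels then yields a pointwise bound of $C\,T_{yy}(b-a)$. Taking the expectation over the clean distribution produces $C(b-a)\,\mathbb{E}[T_{yy}]=C\log\bigl(\frac{e+C-1}{1+e^{-1}(C-1)}\bigr)P_T$, as claimed.

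I expect no serious obstacle; the argument is mostly bookkeeping. The one point needing care is the direction of the risk difference. The statement compares noisy risks, so optimality of $\tilde f^\star$ is what gives nonnegativity. The upper bound, by contrast, uses only the boundedness from \cref{pos1} together with the diagonal-dominance assumption, and never needs the minimizing property of $f^\star$. I would also check that the transition-matrix indexing matches the paper's convention $T_{jk}=\Pr(\tilde y=j\mid y=k)$, since the diagonal dominance must hold within each column $k$.
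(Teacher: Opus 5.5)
Your proof is correct, but it follows a different route from the paper's. The paper uses the symmetric-loss template of \cite{mae}. It applies $Ca\le\sum_{i}\mathcal{L}_{\text{ours}}(f(\bm{x}),i)\le Cb$ to trade $T_{yy}\mathcal{L}_{\text{ours}}(f(\bm{x}),y)$ for $T_{yy}$ times the full label-sum. This produces the term $C(b-a)P_T$ plus a remainder $\mathbb{E}\bigl[\sum_{i\neq y}(T_{yy}-T_{iy})\bigl(\mathcal{L}_{\text{ours}}(\tilde f^\star(\bm{x}),i)-\mathcal{L}_{\text{ours}}(f^\star(\bm{x}),i)\bigr)\bigr]$, written here with the column convention; the paper's proof writes $T_{yi}$. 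The paper then discards the remainder using diagonal dominance together with the claim that the clean minimizer $f^\star$ has a one-hot first softmax at $y$, so that $\mathcal{L}_{\text{ours}}(f^\star(\bm{x}),i)=b$ is maximal for $i\neq y$. That claim is an idealization: it presumes deterministic clean labels and is only reached as the logits diverge. Your pointwise comparison needs nothing beyond \cref{pos1}.

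Your route also shows how loose the statement is. Each column of $T$ sums to one, so your pointwise bound is already $\sum_j T_{jy}(b-a)=b-a$ for an arbitrary transition matrix. Diagonal dominance only weakens this to $CT_{yy}(b-a)$, via $CT_{yy}\ge\sum_j T_{jy}=1$. Moreover, $1+e^{-1}(C-1)=e^{-1}(e+C-1)$, so $b-a=1$. The theorem therefore asserts a bound of $CP_T\ge 1$, while your argument shows that the noisy risks of \emph{any} two classifiers differ by at most $1$.

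The paper's decomposition would improve on this only if it used the true range of the label-sum $\sum_i\mathcal{L}_{\text{ours}}(f(\bm{x}),i)$ instead of the crude bounds $Ca$ and $Cb$. That range is $C\log\bigl(1+\frac{e-1}{C}\bigr)-1\le e-2$. Even then, the gain would come only at the price of the idealized description of $f^\star$.
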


% The above theorems grant a bounded difference between optimal clean and noisy classifier, leading to coherent model performances in clean and noisy conditions. In particular,  ~\cref{the1} proves that under learnable symmetric noise, the best noisy data classifier will not performs worse than optimal clean classifier for too much on clean data, in the gap between which only relies on the noisy rate $\eta$ and the number of classes $C$. It is also worth noticing that the upper bound in ~\cref{the1} is decided by the number of classes and noise rate of noisy dataset, where a larger class numbers and noise rate indicate a more challenging task with a less tight gap between noisy and clean expected risks.
The theorems above bound the discrepancy between the optimal classifiers under clean and noisy labels, thereby ensuring consistent performance across these settings. 

%Specifically, \cref{the1} shows that under symmetric label noise, the optimal classifier learned from noisy data cannot be much worse on clean data than the optimal clean classifier, and the gap depends only on noise rate $\eta$ and class number $C$. Furthermore, the upper bounds in these theorems grow with both $C$ and $\eta$,  reflecting that tasks with more classes or higher noise are inherently more challenging and admit a looser coupling between clean and noisy expected risks.

%\noindent\textbf{Further Studies} The theoretical analysis has proved that the boundedness of cross entropy loss combined with double softmax normalization shows under noisy scenarios. However, numerous related studies also introduces bounded loss or similar loss-based smoothing and normalization strategies for noisy label learning. To testify that our  method is the minimal and optimal design for prompt-tuning with noisy labels, we introduce comparative studies invloving these related approaches. The outcome confirms the 

%% file: sec/4_experiments.tex
\section{Experiments}
\label{sec:exp}

\subsection{Experimental Setup}

\textbf{Dataset and Settings}. To evaluate the performance of our method, we conduct extensive experiments on several datasets, including Caltech101 \cite{caltech101}, StanfordCars\cite{stanfordcars}, OxfordPets\cite{oxfordpets}, Flowers102\cite{flowers102}, Food101\cite{food101}, FGVCAircraft\cite{fgvcaircraft}, DTD\cite{dtd}, EuroSAT\cite{eurosat}, and UCF101\cite{ucf101}. These datasets cover classification tasks across generic objects, fine-grained objects, scenes, textures, and actions. We use the same dataset splitting strategy as in CoOp\cite{coop}. In our experiments, we introduce two different types of noise as mentioned in the preliminary, with noise rates set to \{40\%, 60\%, 80\%\} for symmetric noise and \{ 20\%, 30\%, 40\%\} for pair-flip noise to simulate moderate and heavy noise scenarios.

\noindent \textbf{Baselines} We compare our method with multiple baselines, each adopting distinct noise-robust learning strategies. These baselines include:
\begin{itemize}
\item[$\bullet$] Zero-Shot: Using CLIP's zero-shot prediction directly without prompt-tuning.
\item[$\bullet$] CoOp\cite{coop}: A basic prompt-tuning method for vision-language models inherently robust to label noise.
\item[$\bullet$] LogitNorm\cite{logitnorm}: A learning method that bounds the magnitude of the model's logit output via $\ell_2$-normalization with an adjustable hyper-parameter.
\item[$\bullet$] Smoothing\cite{smoothing}: A classic noisy label learning tactic which mixes the one-hot ground-truth label with a uniform distribution for regularization.
\item[$\bullet$] NLPrompt\cite{nlprompt}: A specialized noisy label learning algorithm for prompt-tuning that generates pseudo-label via optimal transport for clean sample selection, and introduces MAE loss~\cite{mae} for wrongly labeled samples.

\end{itemize}
\input{sec/main_result.tex}

% Please add the following required packages to your document preamble:
% \usepackage{multirow}

\noindent \textbf{Implementation Details} In our experiment, we use pre-trained CLIP \cite{clip} as the backbone model, with the ViT-B/16 \cite{vit} as the image encoder. Following the same setting as CoOp\cite{coop}, we introduce a randomly initialized prompt of length 16 as learnable parameters shared across all classes while keeping the entire vision and text encoder frozen through the training process. The total number of training epochs is set to 50 for all datasets. The starting learning rate is set to 0.002 with cosine annealing, and the final learning rate decreases to zero. The hyperparameter is set to 0.2 for label smoothing and 1 for LogitNorm. To ensure experimental stability, the reported results are the average test accuracy of the last five epochs. All experiments are implemented by PyTorch \cite{torch} and conducted on NVIDIA A800 80GB GPU. Notably, our experiments are conducted on the entire training set with the batch size of 32; this differs from previous prompt-tuning approaches for noisy label learning, which use 16-shot few-shot training.

\begin{table*}[t!]
\setlength{\tabcolsep}{4pt}
\centering
\caption{Final accuracy (\%) of different methods on Caltech101 and DTD datasets with extremely heavy symmetric and pair-flip noise. The bolded number indicates the performance of the best model, and the underlined number indicates the second-best model.}
\label{tab:extreme}
\belowrulesep=0pt\aboverulesep=0pt
\begin{small}
\begin{sc}
\begin{tabular}{c|c|ccccccc}
\toprule
Dataset & Noise Type & CoOp  & Smoothing &Bootstrap & LogitNorm & Select & NLPromt & DSPT        \\
\hline
\multirow{2}{*}{Caltech101}           & sym     90\%           & 53.75 & 61.36 & 58.10  & 78.67  & 78.89  & \textbf{87.80}   & \underline{86.68}  \\
                                     & pair    80\%           & 10.53 & 5.94  & 9.41 & 4.60  & 12.76  & \underline{80.32}    & \textbf{80.73}\\
\hline
\multirow{2}{*}{DTD}                 & sym     90\%           & 21.19 & 19.75 & 22.36 & 29.08   & 29.91 & \underline{36.60}   & \textbf{40.02} \\
                                     & pair     80\%         & 8.58  & 5.78 & 6.99 & 1.48   & 5.44  & \underline{30.93}   & \textbf{33.66} \\
\bottomrule
\end{tabular}
\end{sc}
\end{small}
\end{table*}

\subsection{Experimental Results}
The results of our comparative studies are presented in \cref{tab:main}, and the AVG column indicates the average accuracy of a specific setting among nine datasets. Our method achieves the best performance in most cases, while achieving the second-best performance and being compatible with the NLPrompt in the remaining cases, with an accuracy difference of less than 1\% except on Flowers102. Our method also has the best averaged accuracy, which is 0.4\% to more than 5\% higher than NLPrompt, except on 30\% pair-flip noise, in which the difference in accuracy is nearly negligible. Experiments on more noise settings on Caltech101 and OxfordPets are shown in ~\cref{fig:1}.

Beyond the overall performance gain of our method, it can also be derived from \cref{tab:main} that CoOp demonstrates a degree of noise robustness as it maintains high accuracy under moderate noise levels on simple datasets such as Catltech 101, in which it still acquires 80\% accuracy in 80\% sym noise. However, as the noise level increases, CoOp experiences a significant degradation, with an accuracy drop of more than 20\% when the symmetric noise rate rises from 40\% to 80 \% in StanfordCars and FGVCAircraft. In contrast, LogitNorm performs poorly under low noise settings, but receives relatively high averaged accuracy under 80\% symmetric noise and 40\% pair noise, which is more than 5\% higher than Label Smoothing. Meanwhile, the sample-selection-based method NLPrompt exhibits strong noise robustness under both small and large noise rates.

\begin{figure}[t!]   
  \centering            % 
  \hspace{-4mm}
  \subfloat[Gradient for logits through Single-Softmax]   % 
  {
      \includegraphics[width=0.4\textwidth]{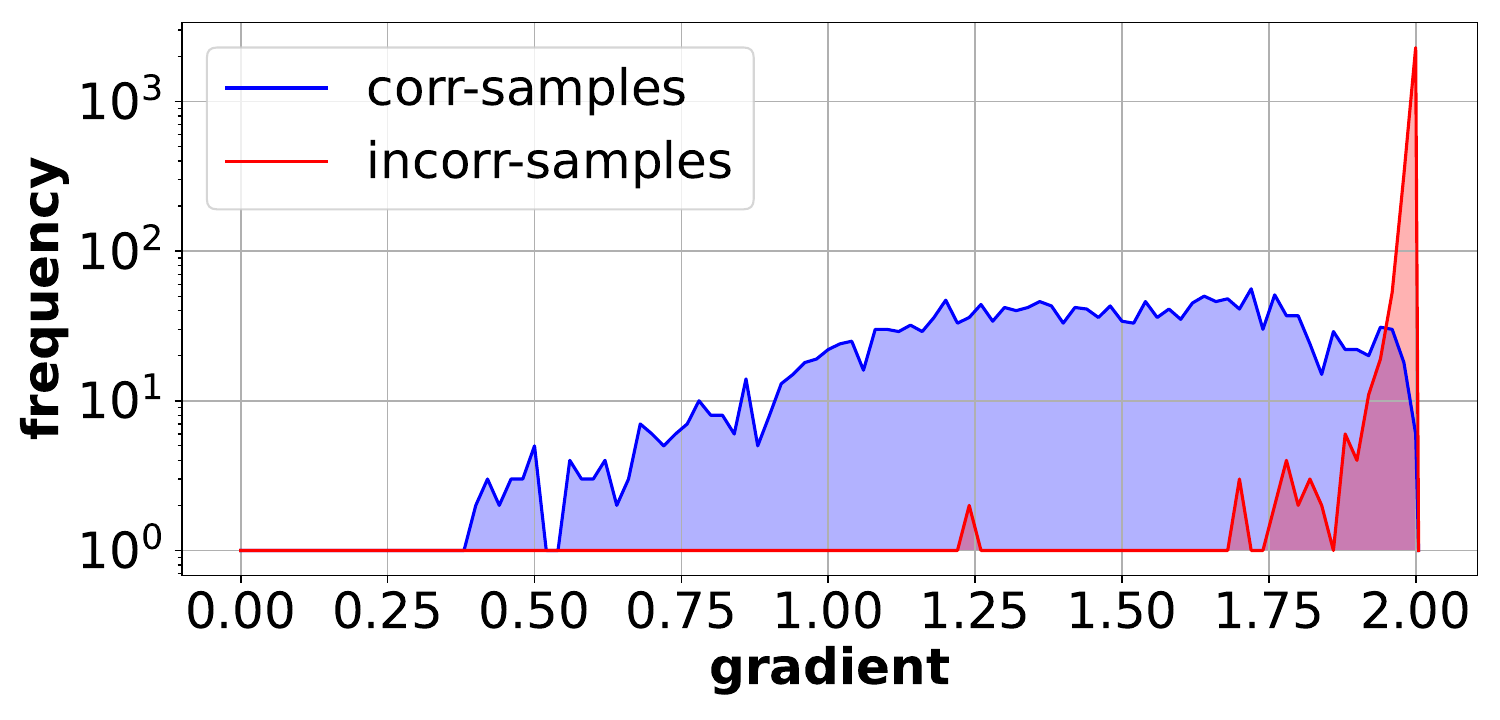}
  }\hspace{-0mm}
 
\hspace{-4mm}
  \subfloat[Gradient for logits through Double-Softmax]   % 
  {
      \includegraphics[width=0.4\textwidth]{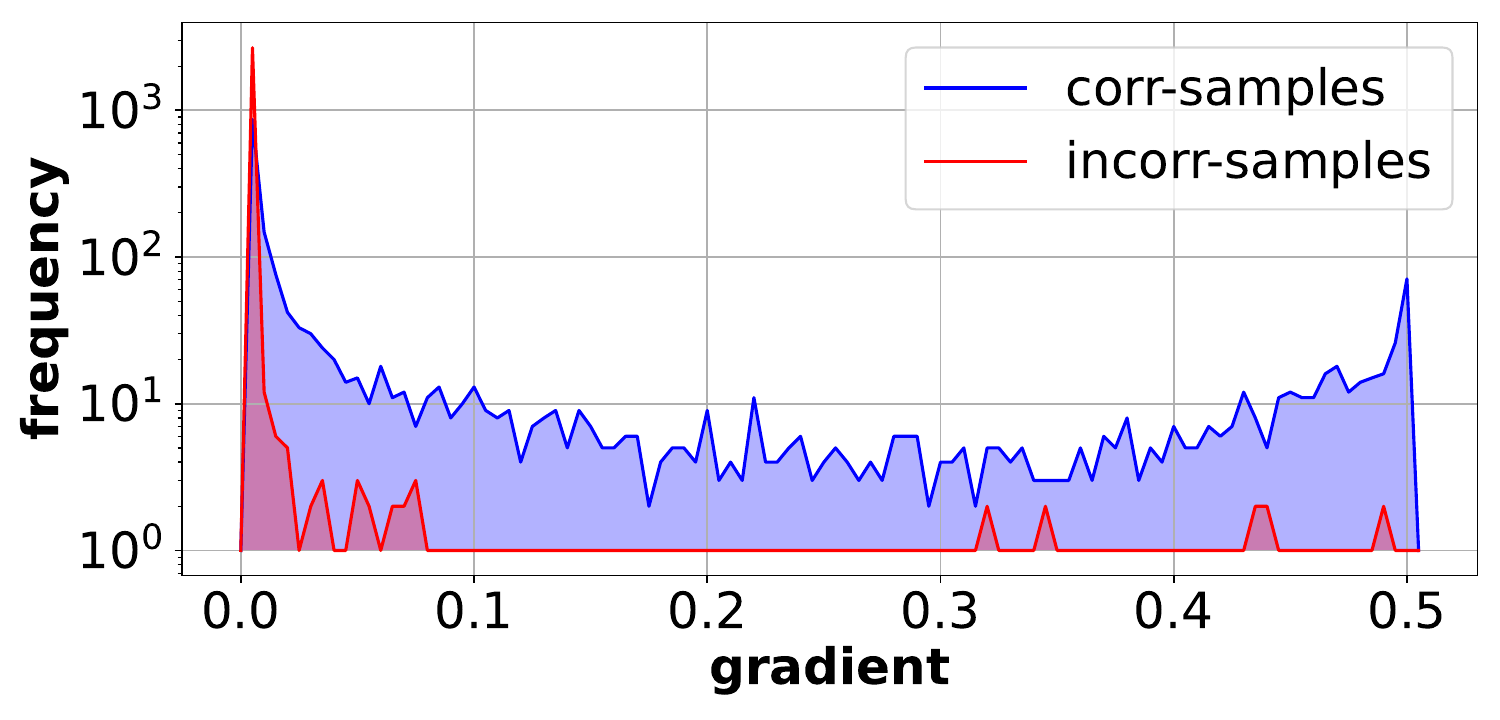}
  }\hspace{-0mm}
  
  \caption{Overall gradient propagated to logit $\bm{z}$ of each correctly labeled and mislabeled sample loss in the first training epoch without parameter update on Caltech101.}
  \label{fig:grad_dis}
  \vspace{-3mm}
\end{figure}

\subsection{Further Analysis}
\label{furana}
\noindent\textit{Q1.}\textit{Does Double-Softmax Cross-Entropy Loss Suppress Gradient Propagation Selectively and Bound the Loss Function?}

\noindent\textit{A1}.The gradient suppression and boundedness of double-softmax cross-entropy are examined by experiments. We record the sum of the absolute gradient propagated from each correctly labeled and mislabeled sample loss to logit $\bm{z}$ in the training process in the first training epoch without parameter updates. This experiment is conducted on Caltech101 under 60\% symmetric noise. As shown in \cref{fig:grad_dis}, the model trained with either single or double softmax cross-entropy loss shows a relatively smooth gradient distribution for samples with correct labels. However, the model with standard cross-entropy loss has disproportionately large gradients for “incorrect” noisy samples near the maximum value of 2. Conversely, our method not only has a smaller average gradient for clean samples but can also suppress the gradient of mislabeled samples to approximately zero, allowing the correctly labeled samples to dominate the training process. Additional studies in the appendix confirm that this phenomenon is consistent throughout the early stage of the training process.

In addition, we record the average loss among samples with correct and incorrect labels in the first 20 epochs in the training process, with 40\% symmetric and pair-flip noise on the Caltech101 dataset. The results can be found in \cref{fig:losses}. The CoOp model shows large differences between the losses of correctly and incorrectly labeled samples, while both the correct and incorrect average losses of our method stay near the value of 4 persistently, qualifying its robustness.

\begin{table*}[t!]
\setlength{\tabcolsep}{4pt}
\centering
\caption{Final accuracy (\%) of different methods on Caltech101 and OxfordPets dataset with similar loss smoothing and bounding approaches. The bolded number indicates the performance of the best model.}
\label{tab:similar}
\belowrulesep=0pt
\aboverulesep=0pt
\begin{small}
\begin{sc}
\begin{tabular}{c|c|cccccccc}
\toprule
{Dataset} & Noise Type & Smoothing & LogitNorm & Square   & Bootstrap & NCE   & LogitClip & DSPT    \\
\hline
\multirow{2}{*}{Caltech101}  & sym   60\%     & 90.54     & 82.14     & 93.69   & 92.55     & 91.18 & 49.31     & \textbf{95.31} \\ 
                           & pair   30\%    & 83.77     & 85.27     & 81.59   & 77.80     & 78.31 & 47.47     & \textbf{95.85} \\
\hline
\multirow{2}{*}{OxfordPets} & sym   60\%     & 85.85     & 84.18     & 87.52  & 84.69     & 77.05 & 84.96     & \textbf{92.60} \\
                            & pair  30\%     & 79.41     & 83.46     & 75.94   & 74.96     & 71.91 & 75.02     & \textbf{92.83} \\
\bottomrule

\end{tabular}
\end{sc}
\end{small}
\end{table*}

\begin{figure}[t!]   
  \centering            % 
  \hspace{-4mm}
  \subfloat[CoOp-Sym noise]   % 
  {
      \label{fig:loss_1}\includegraphics[width=0.24\textwidth]{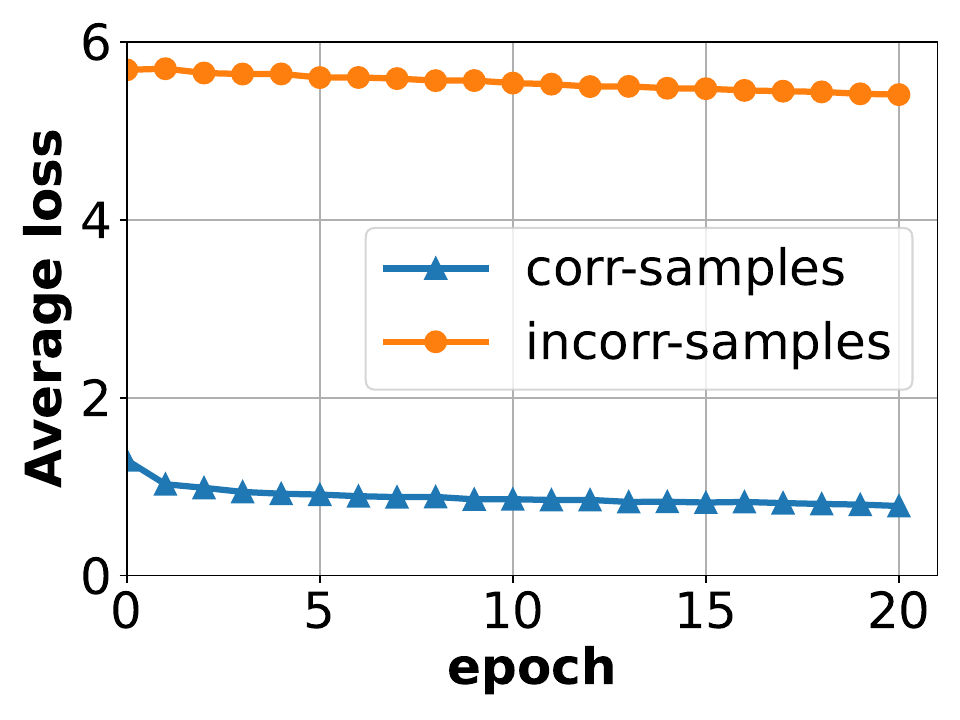}
  }\hspace{-3mm}
  \subfloat[DSPT-Sym noise]
  {
      \label{fig:subfig2}\includegraphics[width=0.24\textwidth]{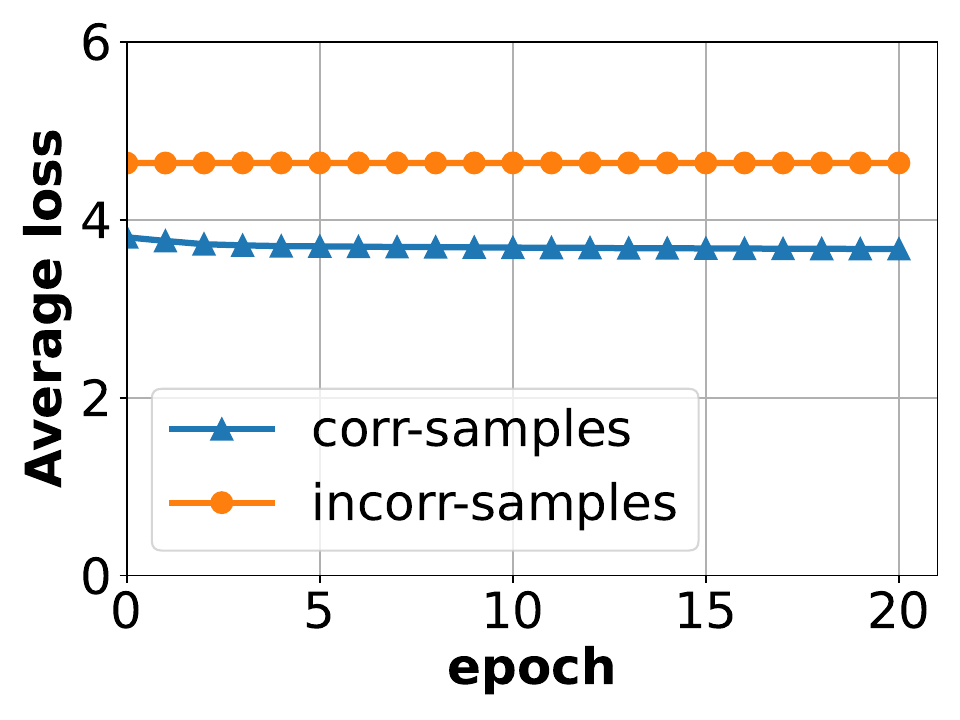}
  }

\hspace{-4mm}
  \subfloat[CoOp-Pair noise]   % 
  {
      \label{fig:loss3}\includegraphics[width=0.24\textwidth]{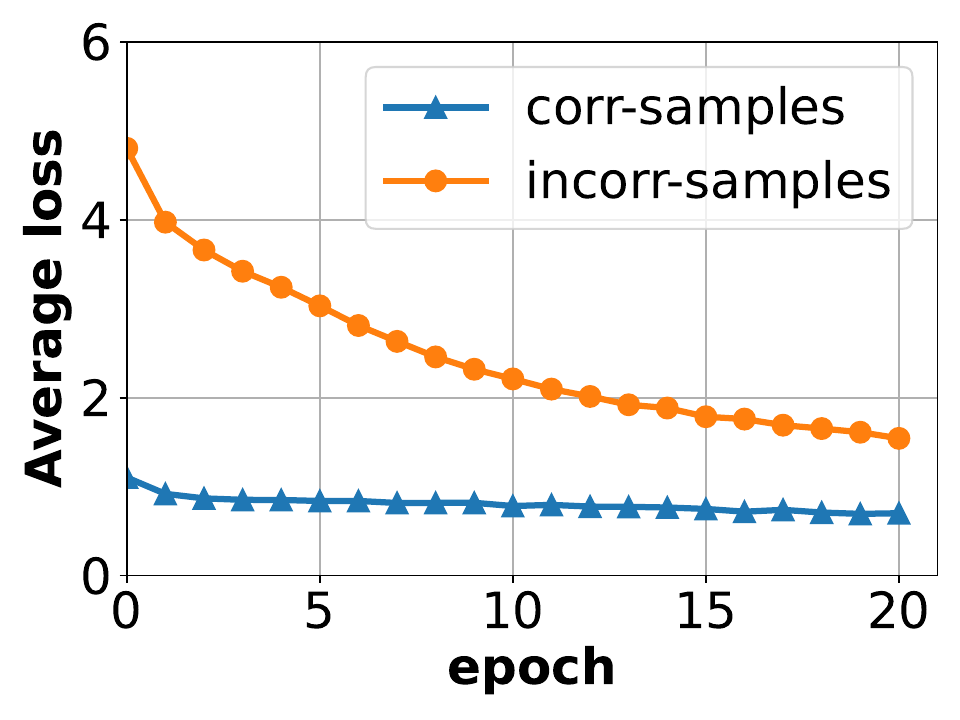}
  }\hspace{-3mm}
  \subfloat[DSPT-Pair noise]
  {
      \label{fig:subfig2}\includegraphics[width=0.24\textwidth]{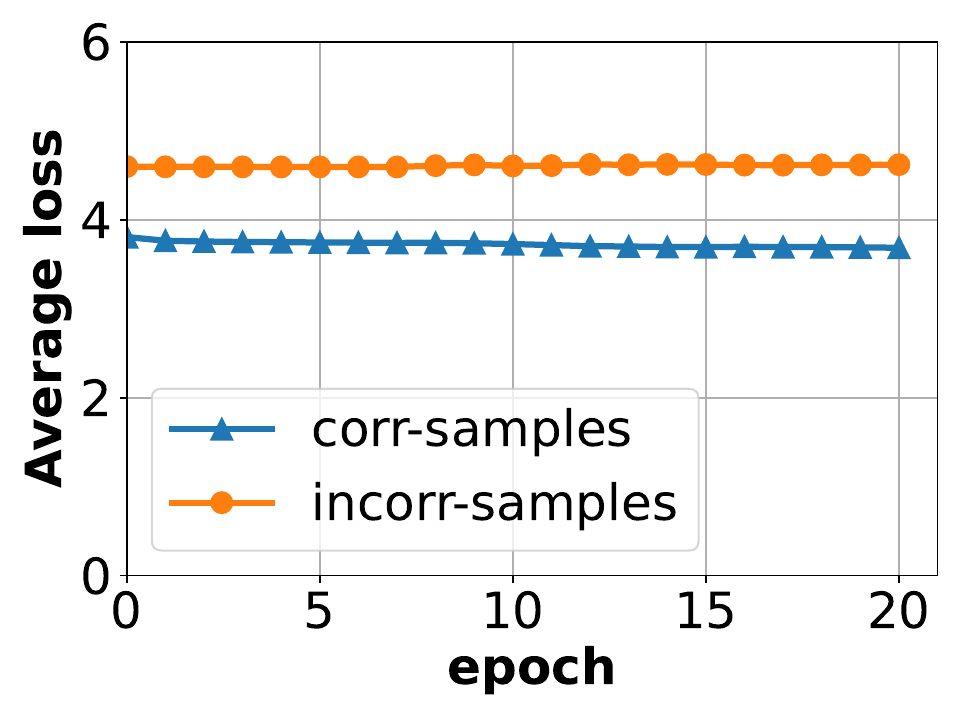}
  }
  
  \caption{Loss curves of correctly labeled and mislabeled samples in the training process on Caltech101.}
  \label{fig:losses}
  \vspace{-3mm}
\end{figure}

\noindent\textit{Q2}.\textit{Does Double-Softmax Cross-Entropy Loss Work Under Severe Label Noise?}

\noindent\textit{A2}.Our approach still works under high noise ratios. To further analyze the noise robustness of our method, we conduct additional experiments introducing extremely high-level label noise, i.e, 90\% sym noise and 80\% pair-flip noise on Caltech101 and DTD datasets. The baselines include several noisy label learning approaches and the naive sample-selection strategy, in which the inconsistent samples with $argmax_i(\bm{z}_i) \neq \tilde{y}$ are excluded in the current epoch. Note that the noise becomes dominant for 80\% pair-flip noise, creating a challenging task for prompt-tuning. As shown in  \cref{tab:extreme}, the prompt-tuning method CoOp, label smoothing, and Logit norm are severely dampened, especially under 80\% pair noise, in which the accuracy of these three methods drops to lower than 12\%. Though still being interfered by label noise, our method demonstrates high performance, outperforming NLPrompt by 0.5\%-3\% in most cases, and only about 1\% lower than NLPrompt in Caltech101 with symmetric noise. In addition, our method shows a significant advantage compared to naive selection, confirming that our method is not totally equivalent to simply picking out consistant sample.

\noindent\textit{Q3}.\textit{Can Similar NLL Methods Achieve Compatible Performance for Prompt-Tuning?}

\noindent\textit{A3}.Our method shows advantages compared with similar designs in prompt-tuning with noisy labels. Numerous related studies also introduce bounded loss or similar loss-based smoothing and normalization strategies for NLL. Therefore, we introduce comparative studies involving several related approaches. Apart from Label Smoothing and LogitNorm, which have already been introduced in the previous experiments, this experiment also includes: NCE~\cite{nce}, the normalized cross entropy loss, which divides the cross entropy loss by the sum of losses under all classes to fit the symmetric loss pattern.  Bootstrapping~\cite{bootstrap}, which mixes the model's own prediction into the ground truth label to smooth the training process. Additionally, we introduce a square normalization approach in which the logit is normalized before a quadratic multiplication. This experiment is conducted on Caltech101 and Oxfordpets datasets, with 60\% symmetric noise and 30\% pair flip noise. The outcome of this experiment is presented in \cref{tab:similar}, in which our model shows its superiority by outperforming all other methods by approximately 2\% on Caltech101 with symmetric noise and more than 5\% in all other cases.

\begin{figure}[t!]   
  \centering
  \hspace{-4mm}
  \subfloat[DTD-Sym noise]   % 
  { 
      \label{fig:logitclip_1}\includegraphics[width=0.24\textwidth]{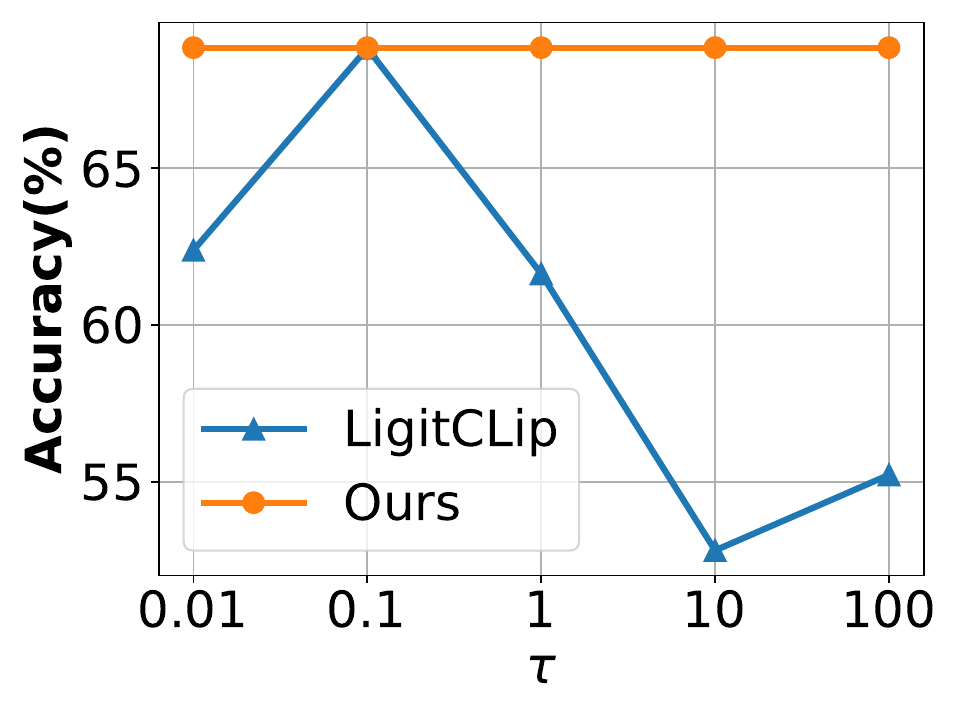}
  }\hspace{-3mm}
  \subfloat[OxfordPets-Pair noise]
  {
      \label{fig:subfig2}\includegraphics[width=0.24\textwidth]{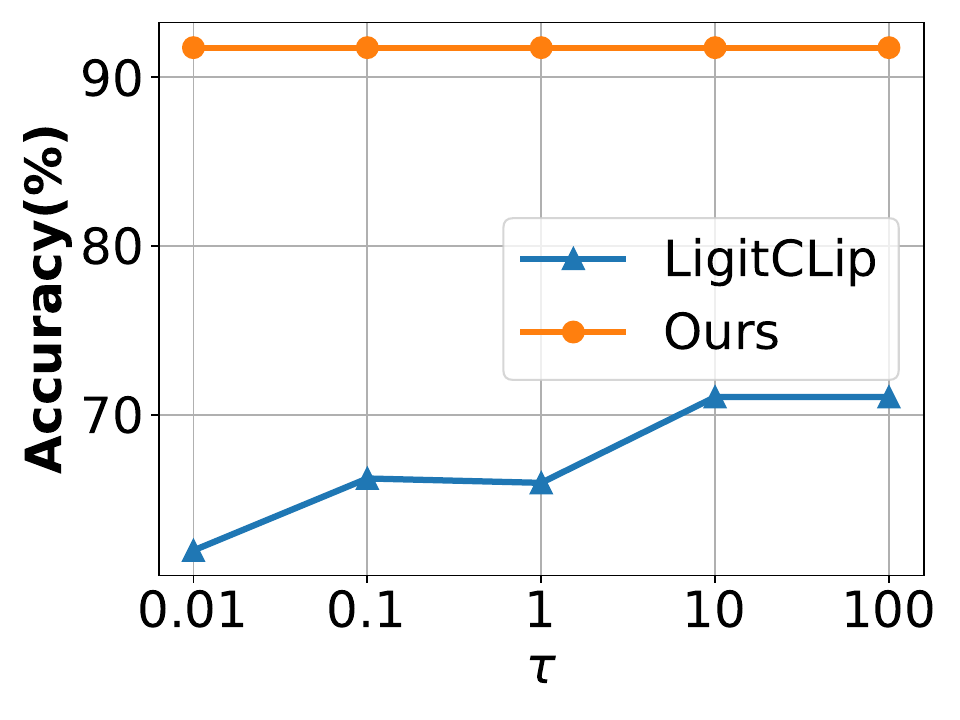}
  }
  
  \caption{Accuracy of LogitClip affected by $\tau$ compared to ours.}
  \label{fig:logitclip}
  \vspace{-3mm}
\end{figure}

\noindent\textit{Q4}.\textit{Why Is LogitClip Not Tested in The Main Experiment?}

\noindent\textit{A4}.LogitClip is too sensitive to hyperparameters to be tested. LogitClip~\cite{logitclip} normalizes output logits whose $\ell_2$-norm exceeds a predefined threshold  $\tau$, which serves as a hyperparameter. In our CLIP prompt-tuning scenario, we observe that although LogitClip shares similar strategies and characteristics with our approach, its performance is highly sensitive to the choice of this hyperparameter. Furthermore, the optimal threshold value varies across different datasets and noise settings, making it difficult to adapt LogitClip consistently in diverse scenarios. As shown in \cref{fig:logitclip}, we present the accuracy of LogitClip and our method on DTD with 40\% symmetric noise and OxfordPets with 40\% pair noise. The optimal threshold in the first setting is approximately 0.1, whereas it exceeds 1 in the second setting. In \cref{fig:logitclip_1}, the change of $\tau$ causes accuracy drop of more than 15\%. These results demonstrate that our method achieves not only superior performance but also greater flexibility compared to LogitClip.

%% file: sec/main_result.tex
\begin{table*}[t!]
\belowrulesep=-1pt
\aboverulesep=-1pt
\setlength{\tabcolsep}{3pt}
\renewcommand{\arraystretch}{1.05}
\caption{Final accuracy (\%) of tested methods on different datasets and various noise conditions. The bolded number indicates the performance of the best model, and the underlined number indicates the second-best model. All results are the averaged accuracy in the last five epochs and AVG denotes the average accuracy among nine datasets.}
\label{tab:main}
\begin{small}
\begin{sc}
\begin{tabular}{@{}l|c|lll|lll|c|lll|lll@{}}
\toprule
\multicolumn{1}{c|} {\multirow{2}{*}{Method}} & \multirow{2}{*}{Dataset}      & \multicolumn{3}{c|}{Noise Type: sym}            & \multicolumn{3}{c|}{Noise Type: pair}             & \multirow{2}{*}{Dataset}      & \multicolumn{3}{c|}{Noise Type: sym}              & \multicolumn{3}{c}{Noise Type: pair}             \\ 
                     &                              & 40\%           & 60\%           & 80\%           & 20\%           & 30\%           & 40\%           &                               & 40\%           & 60\%           & 80\%           & 20\%           & 30\%           & 40\%             \\
\hline
ZeroShot & \multirow{6}{*}{\makecell{Caltech \\ 101}} & \multicolumn{6}{c|}{81.43} & \multirow{6}{*}{\makecell{FGVC \\ Aircraft}} &  \multicolumn{6}{c}{24.37}   \\
\cdashline{3-8}[1pt/1pt] \cdashline{10-15}[1pt/1pt]
CoOp                                        &                              & 91.06          & 89.37          & 80.00          & 86.14          & 76.38          & 63.21          &                              & 34.67          & 25.84          & 12.27          & 37.83          & 34.03          & 29.72                                \\
Smoothing                                   &                               & 92.82          & 90.54          & 81.29          & 90.39          & 83.77          & 70.47          &                               & 33.33          & 26.02          & 11.80          & 36.98          & 33.27          & 29.28                                \\
LogitNorm                                   &                               & 83.06          & 82.14          & 80.21          & 86.44          & 85.27          & 82.71          &                               & 26.73          & 23.07          & \underline{17.56}          & 29.01          & 26.41          & 25.26                                \\
NLPrompt                                    &                               & \textbf{95.36}          & \underline{94.42}          & \underline{91.85}          & \underline{95.56}          & \underline{95.21}          & \underline{93.63}          &                               & \underline{34.74}          & \underline{33.38}          & 14.86          & \underline{36.71}          & \underline{34.45}          & \underline{33.60}                                \\

DSPT                                    &                               & \underline{95.07}          & \textbf{95.31} & \textbf{94.01} & \textbf{96.06} & \textbf{95.85} & \textbf{93.71}          &                               & \textbf{36.46}          & \textbf{33.53}          & \textbf{29.12}          & \textbf{38.46} & \textbf{35.93}          & \textbf{33.60}                                \\
 \hline
ZeroShot & \multirow{6}{*}{\makecell{Stanford \\ cars}} & \multicolumn{6}{c|}{65.33} & \multirow{6}{*}{\makecell{DTD}} &  \multicolumn{6}{c}{42.73}   \\
\cdashline{3-8}[1pt/1pt] \cdashline{10-15}[1pt/1pt]
CoOp                                        &                       & 71.56          & 61.76          & 40.48          & 69.38          & 59.99          & 49.47          &                       & 64.55          & 56.73          & 36.99          & 64.73          & 56.69          & 47.53                                \\
Smoothing                                   &                               & 70.58          & 61.45          & 40.90          & 71.61          & 65.21          & 52.51          &                               & \underline{67.77}          & 57.15          & 39.95          & 68.60          & 59.98          & 49.94                                \\
LogitNorm                                   &                               & 57.05          & 56.97          & 54.05          & 57.96          & 56.49          & 51.48          &                               & 66.84          & \underline{61.81}          & \underline{49.07}          & 69.43          & 66.50          & 59.30                                \\
NLPrompt                                    &                               & \textbf{79.70}          & \textbf{77.21}          & \textbf{71.98}          & \underline{80.90}          & \underline{78.48}          & \underline{75.01}          &                               & 65.97          & 59.17          & 41.25          & \underline{70.17}          & \textbf{70.07} & \underline{65.70}                                \\

DSPT                                    &                               & \underline{79.64}          & \underline{76.78}          & \underline{71.50}          & \textbf{81.14}          & \textbf{78.82}          & \textbf{75.15}          &                               & \textbf{68.84}          & \textbf{63.85}          & \textbf{55.50} & \textbf{71.03}          & \underline{69.93}          & \textbf{66.74}                                \\
 \hline

 ZeroShot & \multirow{6}{*}{\makecell{Oxford \\ Pets}} & \multicolumn{6}{c|}{88.19} & \multirow{6}{*}{\makecell{EuroSAT}} &  \multicolumn{6}{c}{42.91}   \\
\cdashline{3-8}[1pt/1pt] \cdashline{10-15}[1pt/1pt]
CoOp                                        &                             & 89.17          & 83.05          & 60.43          & 83.22          & 72.96          & 60.76          &                              & \underline{93.09}          & 90.75          & 76.03          & 89.75          & 82.43          & 69.77                                \\
Smoothing                                   &                               & 89.38          & 85.85          & 62.76          & 87.97          & 79.41          & 65.58          &                               & 92.03          & 91.00          & 77.41          & 92.56          & 86.99          & 73.65                                \\
LogitNorm                                   &                               & 90.75          & 84.18          & 79.02          & 89.73          & 83.46          & 72.27          &                               & 92.30          & \underline{91.33}          & \underline{81.00}          & 92.86          & 90.91          & 83.31                                \\
NLPrompt                                    &                               & \underline{92.41}          & \underline{88.97}          & \underline{84.18}          & \textbf{93.63} & \textbf{92.85} & \underline{91.27}          &                               & 92.04          & 88.83          & 72.01          & \underline{93.93}          & \underline{93.19}          & \underline{82.00}                                \\

DSPT                                    &                               & \textbf{93.00} & \textbf{92.60} & \textbf{91.25} & \underline{93.19}          & \underline{92.83}          & \textbf{91.74} &                               & \textbf{93.39}          & \textbf{92.64}          & \textbf{82.06} & \textbf{94.42} & \textbf{93.81} & \textbf{92.38}                                \\
 \hline
 ZeroShot & \multirow{6}{*}{\makecell{Flowers \\ 102}} & \multicolumn{6}{c|}{66.19} & \multirow{6}{*}{\makecell{UCF101}} &  \multicolumn{6}{c}{65.19}   \\
\cdashline{3-8}[1pt/1pt] \cdashline{10-15}[1pt/1pt]
CoOp                                        &                                & 91.03          & 84.54          & 66.40          & 86.27          & 74.99          & 59.39          &                           & 79.88          & 75.33          & 62.89          & 73.20          & 65.53          & 54.78                                \\
Smoothing                                   &                               & \underline{91.89}          & 83.89          & 66.91          & \textbf{91.09} & 81.44          & 65.28          &                               & 79.97          & 76.31          & 63.57          & 78.99          & 72.26          & 59.06                                \\
LogitNorm                                   &                               & 73.11          & 68.96          & 60.82          & 76.01          & 72.89          & 65.15          &                               & 75.73          & 72.43          & 64.89          & 76.31          & 73.90          & 69.30                                \\
NLPrompt                                    &                               & \textbf{93.29} & \underline{84.80}          & \underline{80.08}          & 90.37          & \textbf{91.47} & \underline{88.39}          &                               & \underline{83.00}          & \textbf{82.04} & \textbf{77.40}          & \textbf{83.75} & \underline{82.72}          & \textbf{81.55}                                \\

DSPT                                    &                               & 91.12          & \textbf{88.15}          & \textbf{80.35}          & \underline{91.01}          & \underline{86.93}          & \textbf{90.07} &                               & \textbf{83.56} & \underline{81.19}          & \underline{76.48}          & \underline{83.37}          & \textbf{83.01} & \underline{81.13}                      
         \\ 
         \hline

 ZeroShot & \multirow{6}{*}{\makecell{Food101}} & \multicolumn{6}{c|}{85.46} & \multirow{6}{*}{\makecell{AVG}} &  \multicolumn{6}{c}{62.50}   \\
\cdashline{3-8}[1pt/1pt] \cdashline{10-15}[1pt/1pt]
CoOp                                        &                           & 87.76          & 86.63          & 84.45          & 79.58          & 70.71          & 58.60          &                             & 78.09          & 72.67          & 57.77          & 74.46          & 65.97          & 54.80                                \\
Smoothing                                   &                               & 87.71          & 86.76          & 84.66          & 84.01          & 77.77          & 64.24          &                               & 78.39          & 73.22          & 58.81          & 78.02          & 71.12          & 58.89                                \\
LogitNorm                                   &                               & 85.32          & 85.61          & 84.73          & 84.71          & 83.86          & 79.05          &                               & 72.32          & 69.61          & 63.49          & 73.60          & 71.07          & 65.31                                \\
NLPrompt                                    &                               & \underline{88.50}          & \underline{88.08}          & \underline{85.78}          & \textbf{89.05}          & \underline{88.65}          & \underline{88.41}          &                               & \underline{80.56}          & \underline{77.43}          & \underline{68.82}          & \underline{81.56}          & \textbf{80.79} & \underline{77.73}                                \\

DSPT                                    &                               & \textbf{88.86}          & \textbf{88.62} & \textbf{87.73} & \underline{89.03}          & \textbf{88.96} & \textbf{88.77}          &                               & \textbf{81.10}          & \textbf{79.19}          & \textbf{74.22} & \textbf{81.97} & \underline{80.67}          & \textbf{79.25}            \\
\bottomrule
\end{tabular}
\end{sc}
\end{small}
\end{table*}

%% file: sec/5_conclusion.tex
\section{Conclusion}
% In this paper, we propose the double-softmax cross-entropy loss for prompt-tuning in VLMs under label noise.  By normalizing the VLM's logit output, our method can rectify the confidence level, reducing loss of mislabeled samples and providing noise robustness without complex structures. Theoretical analysis reveals that our method shares similar mechanisms of loss boundedness and risk consistency with other normalized-based noise-resistant loss functions. Extensive experiments conducted on various datasets and noise settings confirm that our method is compatible with the state-of-the-art approach, while further explorations study the insights of double-softmax from different aspects.

%Specifically, we introduce an additional softmax operation layer before the prompt tuning head of CLIP, forming two successive double-softmax layers.

% \section{Conclusion}
\label{sec:conclusion}
%In this paper, we introduced Double-Softmax Prompt Tuning (DSPT), a simple yet effective framework designed to address the sensitivity of vision-language models like CLIP to label noise during prompt tuning. By leveraging sequential probabilistic normalization, DSPT transforms the traditionally problematic phenomenon of "gradient vanishing" into a principled noise-filtering shield. Our method induces a self-adaptive saturation zone that effectively "zeros out" disruptive gradients from high-error noisy samples while selectively permitting informative updates for reliable adaptation.

%In this paper, we introduced Double-Softmax Prompt Tuning (DSPT), a simple yet effective framework designed for vision-language models in prompt tuning with label noise. By leveraging additional normalization, DSPT transforms the traditionally problematic phenomenon of "gradient vanishing" into a principled noise-filtering shield, inducing a self-adaptive saturation zone that effectively "zeros out" disruptive gradients from high-error noisy samples while selectively permitting informative updates for reliable adaptation. Theoretical analysis confirms that our method provides intrinsic gradient suppression and ensures loss boundedness, while extensive experimental results across various benchmarks demonstrate that DSPT achieves state-of-the-art robustness. 

In this paper, we introduced Double-Softmax Prompt Tuning (DSPT), a simple yet effective framework for vision-language models in prompt tuning with label noise. By leveraging additional normalization, DSPT transforms the traditionally problematic phenomenon of "gradient vanishing" into a principled noise-filtering shield. Our method induces a self-adaptive saturation zone that effectively "zeros out" disruptive gradients from high-error noisy samples while selectively permitting informative updates for reliable adaptation. Our extensive experimental results across various benchmarks demonstrate that DSPT achieves state-of-the-art robustness. Theoretical analysis further confirms that the double-softmax cross-entropy loss provides intrinsic gradient suppression and ensures loss boundedness, preventing the model from fitting incorrect labels. Looking ahead, our future work is to extend double-softmax prompt tuning beyond CLIP-based classification to large vision–language models (LVLM), exploring its effectiveness for LVLM-based downstream noisy label learning. We believe that the simplicity, generality, and strong empirical robustness of double-softmax make it a promising building block for future noisy label learning frameworks for finetuning multimodal foundation models.

%Our extensive experimental results across various benchmarks demonstrate that DSPT achieves state-of-the-art robustness, outperforming existing methods that rely on complex architectures or handcrafted hyperparameters. The theoretical analysis further confirms that the double-softmax cross-entropy loss provides intrinsic gradient suppression and ensures loss boundedness, preventing the model from fitting incorrect labels. 

%In this paper, we revisited CLIP prompt tuning through the lens of noisy label robustness. Starting from a zero-shot guided denoising motivation, we observed that the robustness gains largely stem from the prompt-tuned branch under an unexpected additional softmax layer in our implementation. Distilling this insight, we proposed a minimal double-softmax prompt-tuning to suppress noise amplification and yield a bounded, noise-tolerant loss. Extensive experiments on standard CLIP-based noisy-label benchmarks demonstrate that this simple modification consistently outperforms prior complicated denoising methods, while our theoretical analysis shows that double-softmax meets desirable robustness properties under label noise.

%% file: sec/X_suppl.tex
\section{Proofs for Theoretical analysis}
In this section, we provide detailed proofs for Propositions and Theorems in our paper. 

\noindent\textbf{Proof for Proposition 3.1}

\noindent\textit{Proof. }
\begin{align*}
    \frac{\partial \mathcal{L}_{ours}}{\partial \bm{z}_i} &= \sum_j \frac{\partial \mathcal{L}_{ours}}{\partial \bm{q}_j} \left( \sum_k \frac{\partial \bm{q}_j}{\partial \bm{p}_k} \cdot \frac{\partial \bm{p}_k}{\partial \bm{z}_i} \right) \\
     &= -\frac{1}{\bm{q}_y} \sum_k \frac{\partial \bm{q}_{\tilde{y}}}{\partial \bm{p}_k} \cdot \frac{\partial \bm{p}_k}{\partial \bm{z}_i}
\end{align*}

For $\bm{s} = softmax(\bm{a})$, the derivative is $\frac{\partial \bm{s}_m}{\partial \bm{a}_n} = \bm{s}_m(\delta_{mn} - \bm{s}_n)$. Therefore:
\begin{align*}
    \frac{\partial \mathcal{L}_{ours}}{\partial \bm{z}_i} &= -\frac{1}{\bm{q}_{\tilde{y}}} \sum_k (\bm{q}_{\tilde{y}}(\delta_{\tilde{y}k } - \bm{q}_k))\cdot (\bm{p}_k(\delta_{ki} - \bm{p}_i)) \\
    &=-\sum_k (\delta_{\tilde{y}k } - \bm{q}_k)\cdot (\bm{p}_k(\delta_{ki} - \bm{p}_i)) \\
    &=-\sum_k \delta_{\tilde{y}k }\bm{p}_k(\delta_{ki} - \bm{p}_i) - \bm{q}_k\bm{p}_k(\delta_{ki} - \bm{p}_i) \\
    &= -(\bm{p}_{\tilde{y}}(\delta_{\tilde{y}i} - \bm{p}_i) - \bm{p}_i(\bm{q}_i-\sum_k\bm{p}_k\bm{q}_k)) \\
    &= \bm{p}_i \bm{q}_i - \bm{p}_i \sum_k\bm{p}_k\bm{q}_k -\bm{p}_{\tilde{y}} \delta_{\tilde{y}i} + \bm{p}_{\tilde{y}} \bm{p}_i
\end{align*}
    
Because $\forall i,\bm{p}_{\tilde{y}} \delta_{\tilde{y}i} = \bm{p}_i \delta_{\tilde{y}i}$:
\begin{align*}
    \frac{\partial \mathcal{L}_{ours}}{\partial \bm{z}_i} 
    &= \bm{p}_i \bm{q}_i - \bm{p}_i \sum_k\bm{p}_k\bm{q}_k -\bm{p}_i \delta_{\tilde{y}i} + \bm{p}_{i} \bm{p}_{\tilde{y}} \\
    &= \bm{p}_i \left[ (\bm{q}_i - \delta_{\tilde{y}i}) + (\bm{p}_{\tilde{y}} - \sum_j \bm{p}_j \bm{q}_j) \right]
\end{align*}

Which concludes the proof.

\noindent\textbf{Proof for Theorem 3.2}

\noindent\textit{Proof. }
Let $\bm{p}_{\hat{y}} = 1 - \delta$ where $\delta > 0$, then:

For $\forall i \neq \hat{y}$:
\begin{align*}
    |\frac{\partial \mathcal{L}_{ours}}{\partial \bm{z}_i}|&= \left|\bm{p}_i \left[ (\bm{q}_i - \delta_{\tilde{y}i}) + (\bm{p}_{\tilde{y}} - \sum_j \bm{p}_j \bm{q}_j) \right]\right| \\
    &\leq 2|\bm{p}_i|  
\end{align*}
Therefore:
\begin{align*}
     \sum_{i \neq \hat{y}} \left| \frac{\partial \mathcal{L}_{ours}}{\partial z_i} \right| \leq \sum_{i \neq \hat{y}} |\bm{p}_i| = 2\delta
\end{align*}
For $i = \hat{y}$:
\begin{align*}
    |\frac{\partial \mathcal{L}_{ours}}{\partial \bm{z}_i}|&= 
    |\frac{\partial \mathcal{L}_{ours}}{\partial \bm{z}_{\hat{y}}}| \\
    &= \left|\bm{p}_{\hat{y}} \left[ \bm{q}_{\hat{y}} + (\bm{p}_{\tilde{y}} - \sum_j \bm{p}_j \bm{q}_j) \right]\right| \\
    &\leq \left|\left[ \bm{q}_{\hat{y}} + (\bm{p}_{\tilde{y}} - \sum_j \bm{p}_j \bm{q}_j) \right]\right| \\
    &=|\delta\bm{q}_{\hat{y}} + \bm{p}_{\tilde{y}} - \sum_{j \neq \hat{y}} \bm{p}_j \bm{q}_j| \\
    & \leq \delta\bm{q}_{\hat{y}} + \bm{p}_{\tilde{y}} + \sum_{j \neq \hat{y}} \bm{p}_j \bm{q}_j \\
    &\leq \delta\bm{q}_{\hat{y}} + \bm{p}_{\tilde{y}} + \sum_{j \neq \hat{y}} \bm{p}_j \\
    &=\delta\bm{q}_{\hat{y}} + \bm{p}_{\tilde{y}} + \delta \\
    &\leq 3\delta
\end{align*}
Therefore:
\begin{align*}
     \sum_i \left| \frac{\partial \mathcal{L}_{ours}}{\partial z_i} \right| \leq 5 \delta
\end{align*}

Then, for all $ \epsilon > 0$, there always exists $0 < \delta < \frac{\epsilon}{5}$ such that:
\begin{align*}
     \sum_i \left| \frac{\partial \mathcal{L}_{ours}}{\partial z_i} \right| \leq \epsilon
\end{align*}
which concludes the proof.

\noindent\textbf{Proof for Proposition 3.3}

\noindent\textit{Proof. }
For any softmaxed vector $\bm{q}$, its exponential $\ell_1$-norm reaches the maximum when it is a one-hot vector, where $\exists i, \bm{q}_i=1$ and $\forall j\neq i, \bm{q}_i=0$. In this case:
\begin{equation*}
       ||e^{\bm{q}}||_1 = e+C-1
\end{equation*}
where $C$ is the length of $\bm{q}$.

and because the maximum and minimum value for $\forall i, \bm{q}_i$, is 1 and 0, which are both accessible when $\bm{q}$ is a one-hot vector, therefore:
\begin{equation*}
 \frac{1}{e+C-1} \leq \bm{p}_i \leq \frac{e}{e+C-1}, \forall i
\end{equation*}
where $\bm{p} = softmax(\bm{q})$.

Consequently, we have that:
\begin{equation*}
   \log(1 + \frac{C - 1}{e}) \leq \mathcal{L}_{\text{ours}} \leq \log(e+C-1)
\end{equation*}
which concludes the proof.

\vspace{0.3cm}

\noindent\textbf{Proof for Theorem 3.4}

\noindent\textit{Proof.} For any classifier model $f$, its expected risk under symmetric label noise and double-softmax cross-entropy loss can be rewritten as:
\begin{align*}
    \mathcal{R}^T&_{\mathcal{L}_{\text{ours}}}(f) \\
   =  & \mathbb{E}_{(\bm{x},\tilde{y})} \sim \mathcal{P}_{noisy}[\mathcal{L}_{\text{ours}}(f(\bm{x},\tilde{y})] \\
      = &\mathbb{E}_{(\bm{x},y)} \sim \mathcal{P}_{clean}[\mathbb{E}_{\tilde{y}|y}[\mathcal{L}_{\text{ours}}(f(\bm{x},\tilde{y}))]] \\
      =  &\mathbb{E}_{(\bm{x},y)} \sim \mathcal{P}_{clean}[(1 - \eta)\mathcal{L}_{\text{ours}}(f(\bm{x},y))
      +\sum_{i \neq y} \frac{\eta}{C - 1} \mathcal{L}_{\text{ours}}(f(\bm{x},i))] \\
     =  &(1 - \eta)\mathcal{R}_{\mathcal{L}_{\text{ours}}}(f) 
     + \mathbb{E}_{(\bm{x},y)} \sim \mathcal{P}_{clean}[\sum_{i \neq y} \frac{\eta}{C - 1} \mathcal{L}_{\text{ours}}(f(\bm{x},i))]
\end{align*}
From \cref{pos1}, we have that:
\begin{equation*}
    \log(1 + \frac{C - 1}{e}) \leq \mathcal{L}_{\text{ours}} \leq \log(e+C-1) 
\end{equation*}
Therefore:
\begin{align*}
    (1 - \eta)\mathcal{R}_{\mathcal{L}_{\text{ours}}}(f) + \eta \log(1 + \frac{C - 1}{e}) \leq \mathcal{R}^T_{\mathcal{L}_{\text{ours}}}(f)
    \leq (1 - \eta)\mathcal{R}_{\mathcal{L}_{\text{ours}}}(f) + \eta \log(e+C-1)
\end{align*}
This inequality can be reformed as:
\begin{align*}
    \frac{1}{1 - \eta}(\mathcal{R}^T_{\mathcal{L}_{\text{ours}}}(f) - \eta \log(e+C-1)) \leq \mathcal{R}_{\mathcal{L}_{\text{ours}}}(f) 
    \leq \frac{1}{1 - \eta}(\mathcal{R}^T_{\mathcal{L}_{\text{ours}}}(f) - \eta  \log(1 + \frac{C - 1}{e}))
\end{align*}
Thus:
\begin{align*}
    \mathcal{R}_{\mathcal{L}_{\text{ours}}}(\tilde{f}^\star) - \mathcal{R}_{\mathcal{L}_{\text{ours}}}(f^\star) \leq 
    \frac{1}{1 - \eta}(\mathcal{R}^T_{\mathcal{L}_{\text{ours}}}(\tilde{f}^\star) - \mathcal{R}^T_{\mathcal{L}_{\text{ours}}}(f^\star) + \eta\log(\frac{e+C-1}{1 + e^{-1} (C - 1)}))
\end{align*}
Since $\mathcal{R}_{\mathcal{L}_{\text{ours}}}(\tilde{f}^\star) - \mathcal{R}_{\mathcal{L}_{\text{ours}}}(f^\star) \geq 0 $ and $\mathcal{R}^T_{\mathcal{L}_{\text{ours}}}(\tilde{f}^\star) - \mathcal{R}^T_{\mathcal{L}_{\text{ours}}}(f^\star) \leq 0$ as $f^\star$ and $\tilde{f}^\star$ are the global minimizer of $\mathcal{R}_{\mathcal{L}_{\text{ours}}}(f)$ and $\mathcal{R}^T_{\mathcal{L}_{\text{ours}}}(f)$ respectively, we have:
\begin{align*}
    0 \leq \mathcal{R}_{\mathcal{L}_{\text{ours}}}(\tilde{f}^\star) - \mathcal{R}_{\mathcal{L}_{\text{ours}}}(f^\star) \leq \frac{\eta}{1 - \eta}\log(\frac{e+C-1}{1 + e^{-1} (C - 1)} )
\end{align*}
which concludes the proof.
\vspace{0.3cm}

\noindent\textbf{Proof for Theorem 3.5}

\noindent\textit{Proof. }For any classifier model $f$, its expected risk under asymmetric label noise and double-softmax cross-entropy loss can be rewritten as:
\begin{align*}
    \mathcal{R}^T&_{\mathcal{L}_{\text{ours}}}(f) \\
    =  & \mathbb{E}_{(\bm{x},\tilde{y})} \sim \mathcal{P}_{noisy}[\mathcal{L}_{\text{ours}}(f(\bm{x},\tilde{y})] \\
    = &\mathbb{E}_{(\bm{x},y)} \sim \mathcal{P}_{clean}[\mathbb{E}_{\tilde{y}|y}[\mathcal{L}_{\text{ours}}(f(\bm{x},\tilde{y}))]] \\
    =  &\mathbb{E}_{(\bm{x},y)} \sim \mathcal{P}_{clean}[T_{yy}\mathcal{L}_{\text{ours}}(f(\bm{x},y))
      +\sum_{i \neq y} T_{yi} \mathcal{L}_{\text{ours}}(f(\bm{x},i))] \\
    \leq &\mathbb{E}_{(\bm{x},y)} \sim \mathcal{P}_{clean}[T_{yy}(C\log(e+C-1) 
    - \sum_{i \neq y}  \mathcal{L}_{\text{ours}}(f(\bm{x},i)) 
    +\sum_{i \neq y} T_{yi} \mathcal{L}_{\text{ours}}(f(\bm{x},i))] \\
    = & C\log(1 + e(C - 1)) \cdot \mathbb{E}_{(\bm{x},y)} \sim \mathcal{P}_{clean}[T_{yy}]
    - \mathbb{E}_{(\bm{x},y)} \sim \mathcal{P}_{clean}[\sum_{i \neq y} (T_{yy}- T_{yi}) \mathcal{L}_{\text{ours}}(f(\bm{x},i))]\\
\end{align*}
On the other hand, we also have that:
\begin{align*}
    \mathcal{R}^T&_{\mathcal{L}_{\text{ours}}}(f) \geq C\log(1 + \frac{C-1}{e}) \cdot \mathbb{E}_{(\bm{x},y)} \sim \mathcal{P}_{clean}[T_{yy}]
    - \mathbb{E}_{(\bm{x},y)} \sim \mathcal{P}_{clean}[\sum_{i \neq y} (T_{yy}- T_{yi}) \mathcal{L}_{\text{ours}}(f(\bm{x},i))]\\
\end{align*}
Therefore:
\begin{align*}
    \mathcal{R}^T&_{\mathcal{L}_{\text{ours}}}(f^\star) - \mathcal{R}^T_{\mathcal{L}_{\text{ours}}}(\tilde{f}^\star) \leq \\
    &C\log(\frac{e+C-1}{1 + e^{-1}(C - 1)}) \cdot \mathbb{E}_{(\bm{x},y)} \sim \mathcal{P}_{clean}[T_{yy}]\\
    &+ \mathbb{E}_{(\bm{x},y)} \sim \mathcal{P}_{clean}[\sum_{i \neq y} (T_{yy}- T_{yi})( \mathcal{L}_{\text{ours}}(\tilde{f}^\star(\bm{x},i))\\
    &- \mathcal{L}_{\text{ours}}(f^\star(\bm{x},i)))]\\
\end{align*}
Let $\bm{z}'^{\star}$ be the softmaxed output logit of $f^\star$ for sample $(\bm{x},y)$. Because $f^\star$ is the global minimizer of clean empirical risk, it is the minimizer of $\mathcal{L}_{\text{ours}}(f(\bm{x},y))$, where $\bm{z}'^{\star}_y=1$ and $\forall i\neq y, \bm{z}'^{\star}_i=0$, which is also the maximizer of  $\forall i\neq y,\mathcal{L}_{\text{ours}}(f(\bm{x},i))$, and as $T_{yy}- T_{yi} > 0$, consequently:
\begin{align*}
   &  \mathbb{E}_{(\bm{x},y)} \sim \mathcal{P}_{clean}[\sum_{i \neq y} (T_{yy}- T_{yi})( \mathcal{L}_{\text{ours}}(\tilde{f}^\star(\bm{x},i))
    - \mathcal{L}_{\text{ours}}(f^\star(\bm{x},i)))] \leq 0
\end{align*}
Thus:
\begin{align*}
   & 0 \leq \mathcal{R}^T_{\mathcal{L}_{\text{ours}}}(f^\star) - \mathcal{R}^T_{\mathcal{L}_{\text{ours}}}(\tilde{f}^\star) 
    \leq C \log (\frac{e + C - 1)}{1 + e^{-1}(C - 1)}) \cdot \mathbb{E}_{(\bm{x},y)} \sim \mathcal{P}_{clean}[T_{yy}]
\end{align*}
which concludes the proof.
\section{Details of Our datasets}
In our experiments, we introduce nine datasets, including classification tasks in different domains. These initially reliable datasets are then corrupted by manually adding noise into sample labels. In this section, we provide a detailed introduction to these datasets, including their original tasks, their class numbers, features of their samples, and the size of their training and testing sets.
\begin{itemize}
\item Caltech101: An object recognition dataset comprising 101 different types of generic categories, with each photo's resolution being roughly 300 $\times$ 200. The size of the training set is 4128, and the size of the training set is 2465.
\item StanfordCars: A fine-grained car recognition dataset that includes 196 different kinds of cars, with class labels depicting their brands, models, and years. The training size of StanfordCars is 6509, and the testing size is 8041.
\item OxfordPets: A fine-grained pets breeds recognition dataset including pictures of 37 kinds of cats and dogs, with large variations in scale, pose, and lighting. The training size of StanfordCars is 2944, and the testing size is 3669.
\item Flowers102: A fine-grained flower recognition dataset consists of 102 kinds of flowers, with the training size of 4093 and testing size of 2463.
\item Food101: A large-scale food classification dataset containing food pictures from 101 categories, with a maximum width of 512 pixels. The size of the training set in our experiment is 50500 and the size of the testing set is 30300.
\item FGVCAircraft: A fine-grained aircraft classification dataset consisting of 102 types of airplanes with distinct variants, families, and manufacturer names. The 102,100 samples are split evenly for training, validating, and testing. This dataset is challenging due to the large size of the pictures and the difficulty of distinguishing different aircraft models.
\item DTD: A texture recognition dataset with 47 classes of textual images. The training size of this dataset is 2820, and the testing size is 1692.
\item EuroSAT: A fine-grained satellite recognition dataset containing 10 types of different landscapes, with the size of each picture being 64 $\times$ 64. The training size of this dataset is 13500, and the testing size is 8100.
\item UCF101: A video action recognition dataset including 101 kinds of human movements. The input picture is acquired by cutting midframes from these videos with a resolution of 320 $\times$ 240. The training set of this dataset is 7639, with testing set of size 3783. 
\end{itemize}

\begin{table*}[t!]
\belowrulesep=0pt
\aboverulesep=0pt
\centering

\caption{Accuracy (\%) of zero-shot prediction and robust unsupervised prompt-tuning method on nine datasets. The AVG indicates the average accuracy among all datasets.}
\label{sup:tab1}
\begin{tabular}{c|ccccc}
\toprule
Dataset                        & Caltech101   & StanfordCars & Oxfordpets & Flowers102 & Food101 \\
\hline
Zero-shot                      & 81.43        & 65.33        & 88.19      & 66.99      & 85.46   \\
{Robust-UPL} & 82.85        & 65.74        & 88.39      & 70.34      & 86.08   \\
\hline
Dataset                        & FGVCAircraft & DTD          & EuroSAT    & UCF101     & AVG     \\
\hline
Zero-shot                      & 24.27        & 42.73        & 42.91      & 65.19      & 62.50    \\
{Robust-UPL} & 22.47        & 44.93        & 43.08      & 67.07      & 63.43    \\
\bottomrule

\end{tabular}
\end{table*}

\begin{figure}[t!]   
  \centering            %
 
  \hspace{-4mm}
  \subfloat[UCF101-Sym noise]   % 
  {
      \label{fig:loss_1}\includegraphics[width=0.24\textwidth]{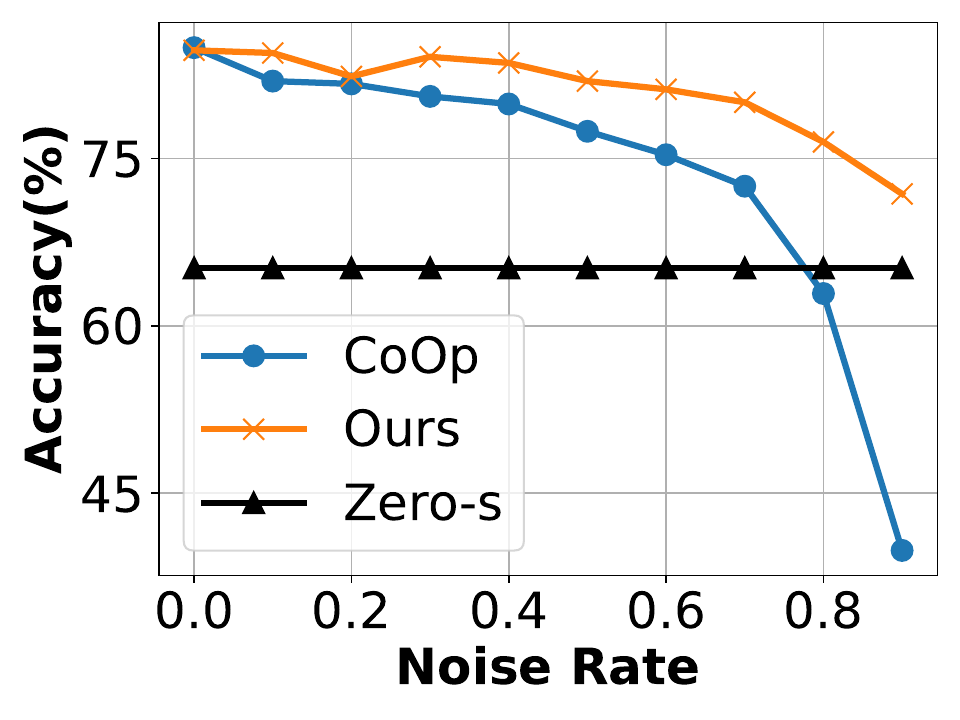}
  }\hspace{-3mm}
  \subfloat[UCF101-Flip noise]
  {
      \label{fig:subfig2}\includegraphics[width=0.24\textwidth]{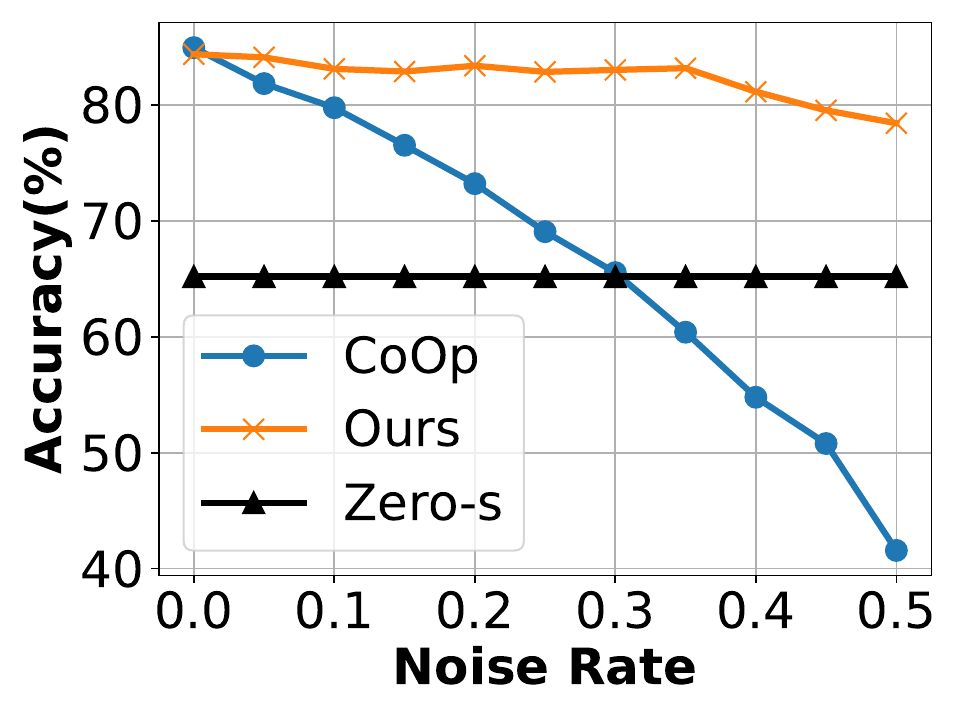}
  }
\hspace{-3mm}
  \subfloat[EuroSAT-Sym noise]   % 
  {
      \label{fig:loss3}\includegraphics[width=0.24\textwidth]{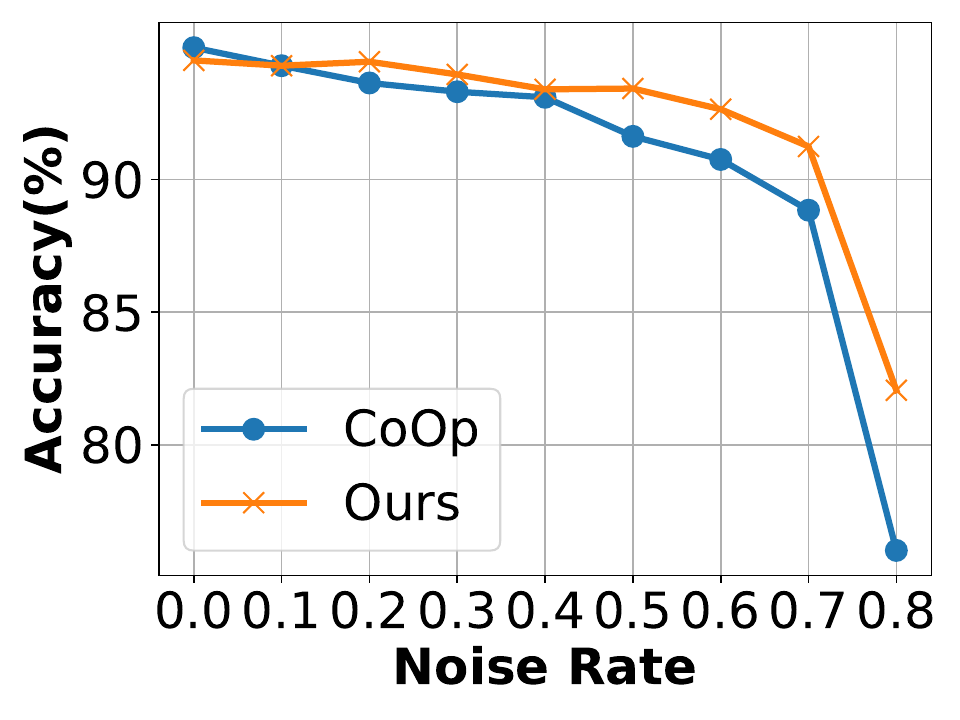}
  }\hspace{-3mm}
  \subfloat[EuroSAT-Flip noise]
  {
      \label{fig:subfig2}\includegraphics[width=0.24\textwidth]{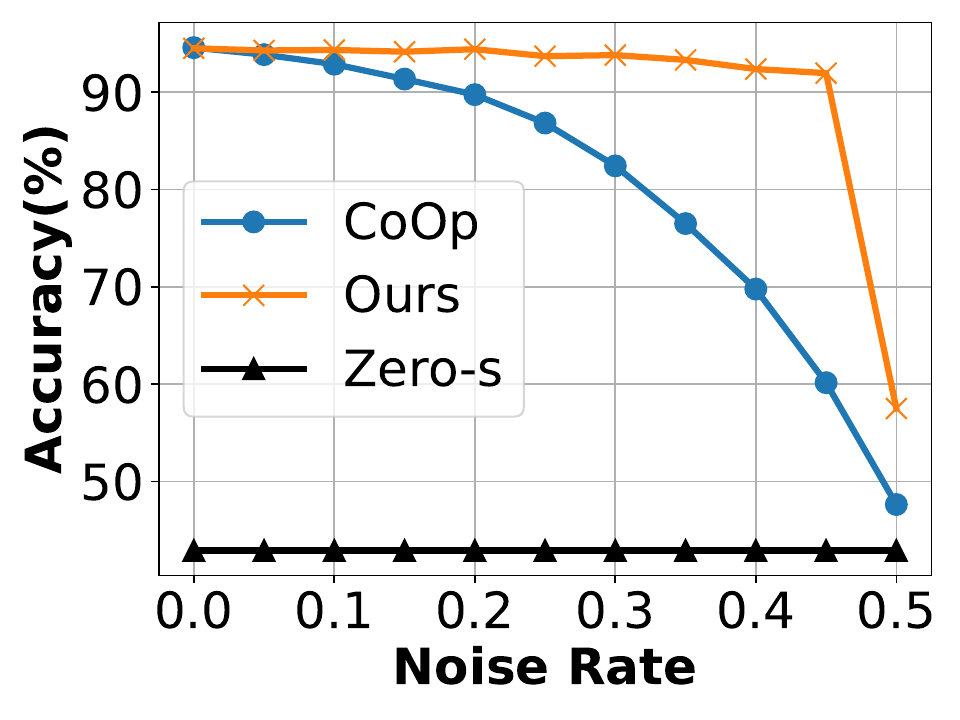}
  }

   \caption{Additional studies on the accuracy curve of prompt-tuning(CoOp), our method, and zero-shot predictions with increasing noise rate under different settings on UCF101 and EuroSAT datasets.}
  \label{sup:fig0}
\end{figure}
\vspace{0.3cm}

\begin{figure}[t!]   
  \centering            %
 
  \hspace{-4mm}
  \subfloat[CoOp on Caltech101]   % 
  {
      \label{fig:gradcur1}\includegraphics[width=0.24\textwidth]{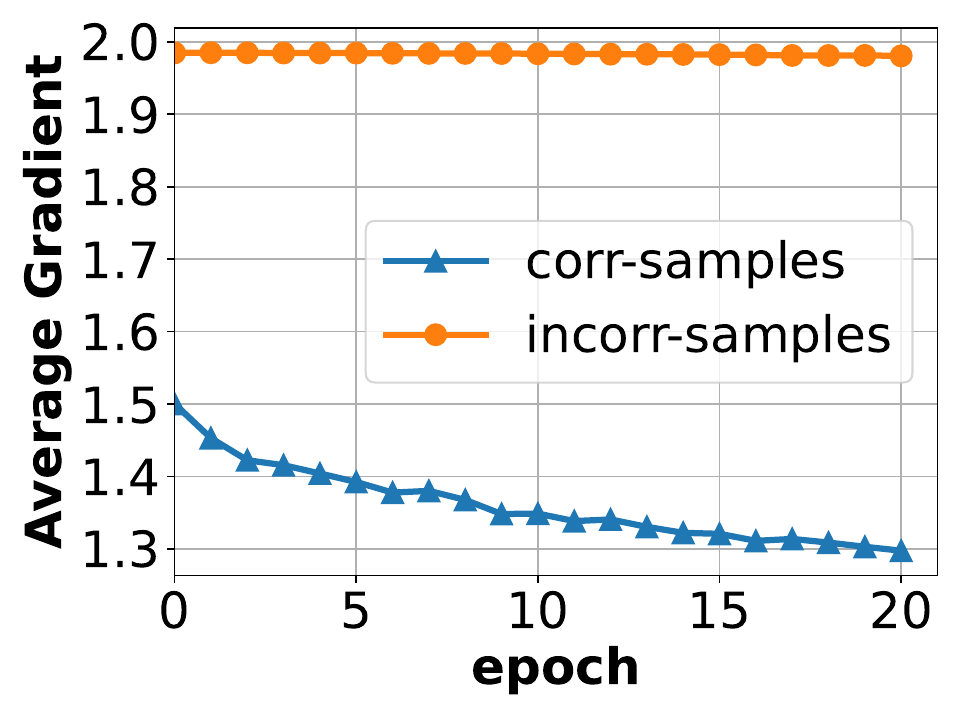}
  }\hspace{-3mm}
  \subfloat[DSPT on Caltech101]
  {
      \label{fig:gradcur2}\includegraphics[width=0.24\textwidth]{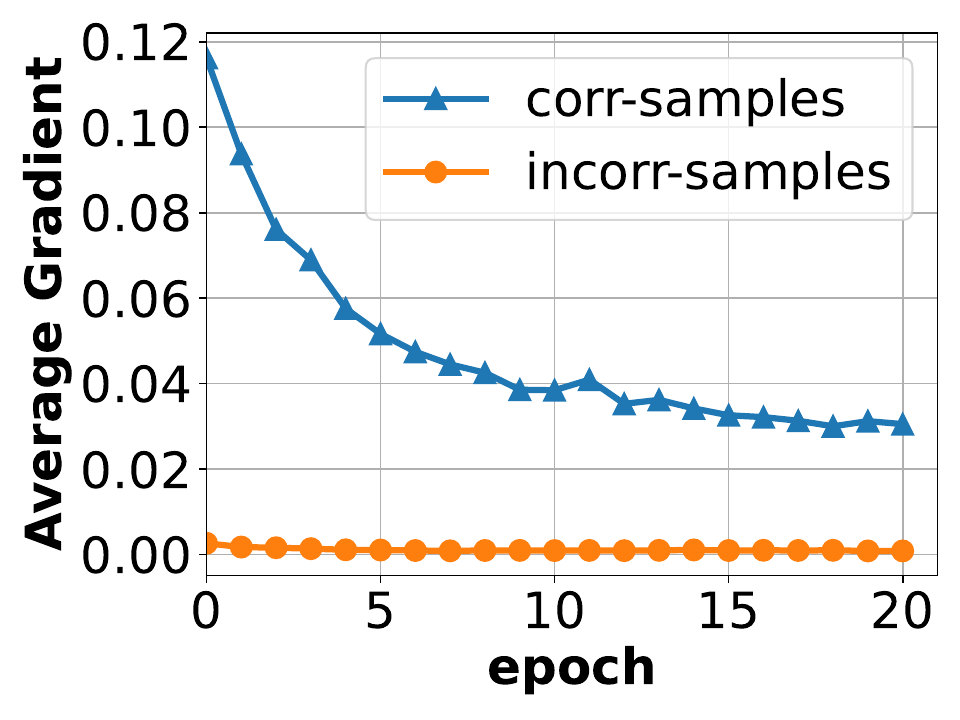}
  }
\hspace{-3mm}
  \subfloat[CoOp on DTD]   % 
  {
      \label{fig:gradcur3}\includegraphics[width=0.24\textwidth]{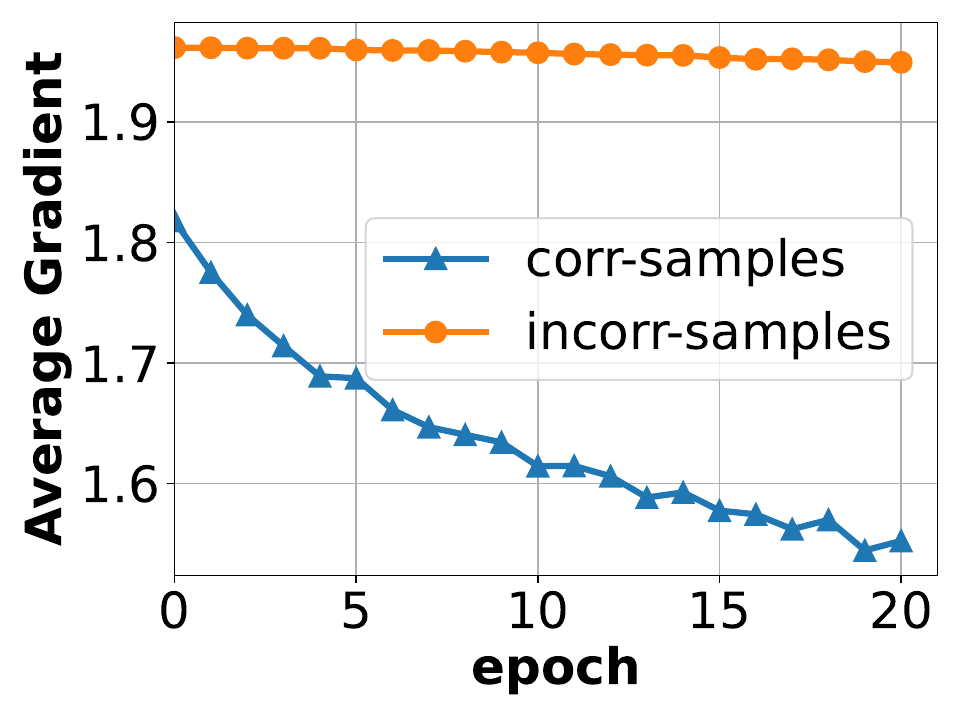}
  }\hspace{-3mm}
  \subfloat[DSPT on DTD]
  {
      \label{fig:gradcur4}\includegraphics[width=0.24\textwidth]{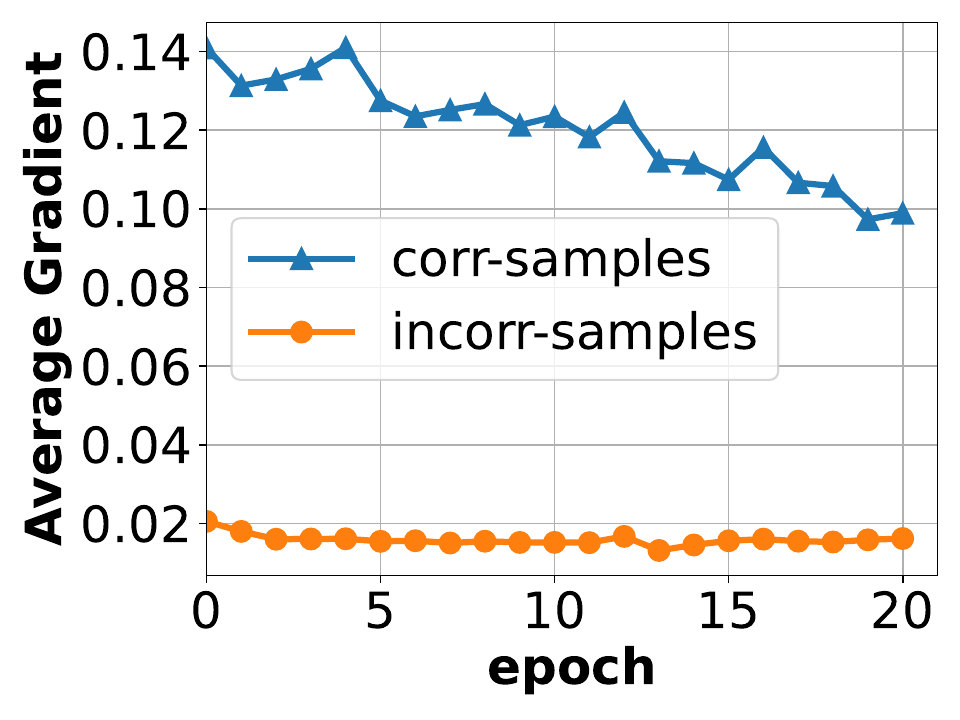}
  }

   \caption{Studies on the gradient curve of prompt-tuning(CoOp) and our method with respect to CLIP's output logits on Caltech101 and DTD datasets with 60\% symmetric noise}
  \label{sup:fig1}
\end{figure}
\vspace{0.3cm}

\begin{figure}[t!]   
  \centering            % 
 
  \hspace{-4mm}
  \subfloat[UCF101-Sym noise]   % 
  {
      \label{fig:loss_1}\includegraphics[width=0.24\textwidth]{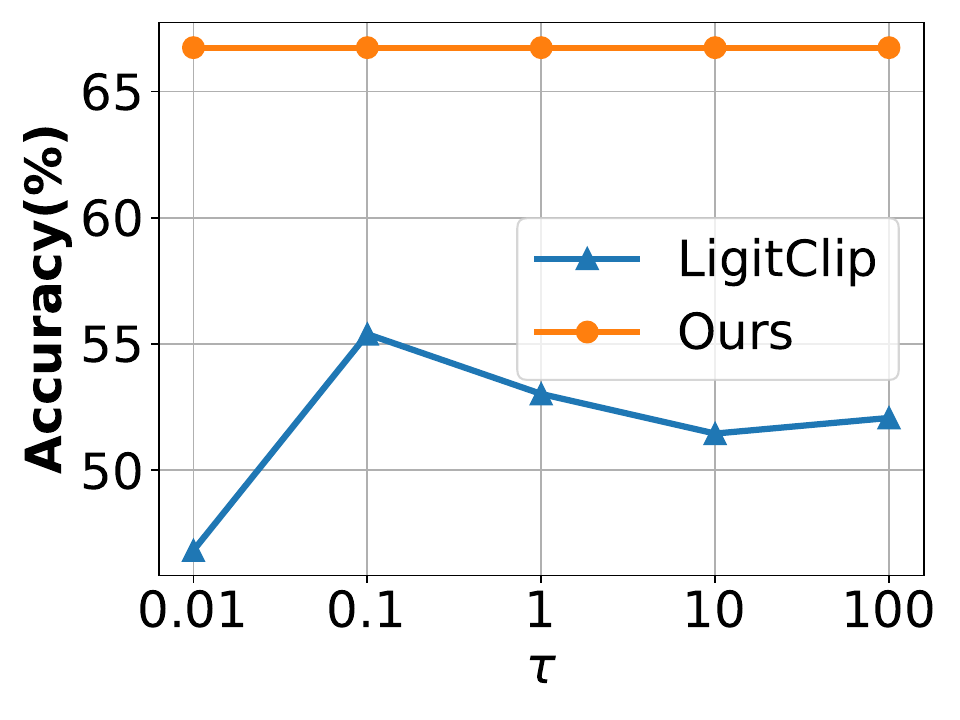}
  }\hspace{-3mm}
  \subfloat[UCF101-Flip noise]
  {
      \label{fig:subfig2}\includegraphics[width=0.24\textwidth]{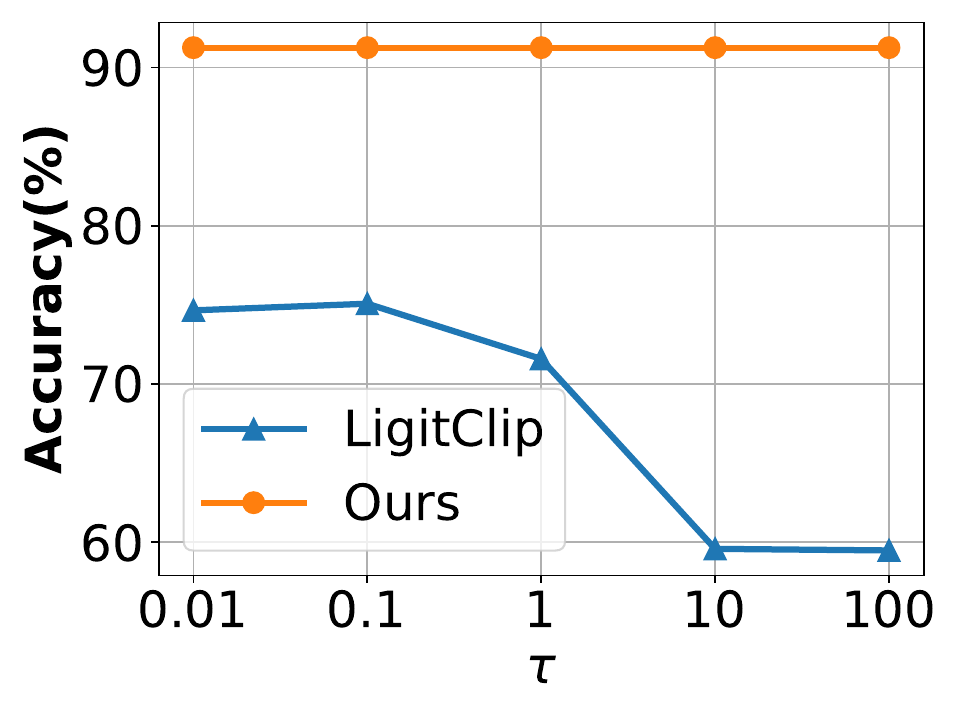}
  }
\caption{Additional studies on the effects of hyperparameter $\tau$ of LogitClip compared to ours on UCF101 dataset.}
 \label{sup:fig2}
\end{figure}

\begin{figure}[t!]   
  \centering            % 
  
  \hspace{-4mm}
  \subfloat[CoOp]   % 
  {
      \label{fig:loss_1}\includegraphics[width=0.24\textwidth]{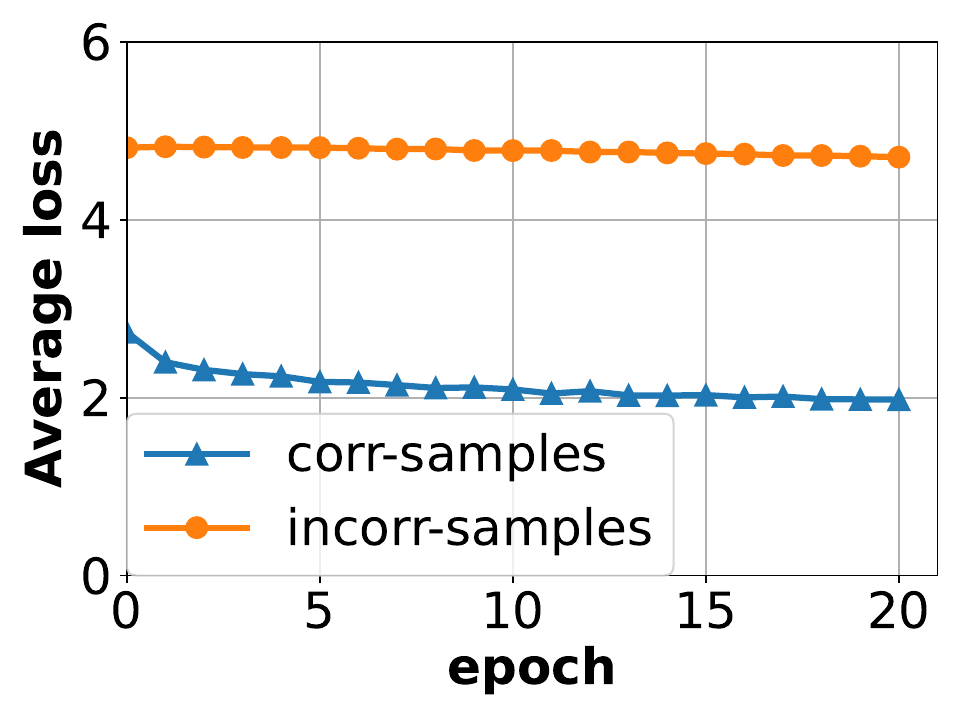}
  }\hspace{-3mm}
  \subfloat[DSPT]
  {
      \label{fig:subfig2}\includegraphics[width=0.24\textwidth]{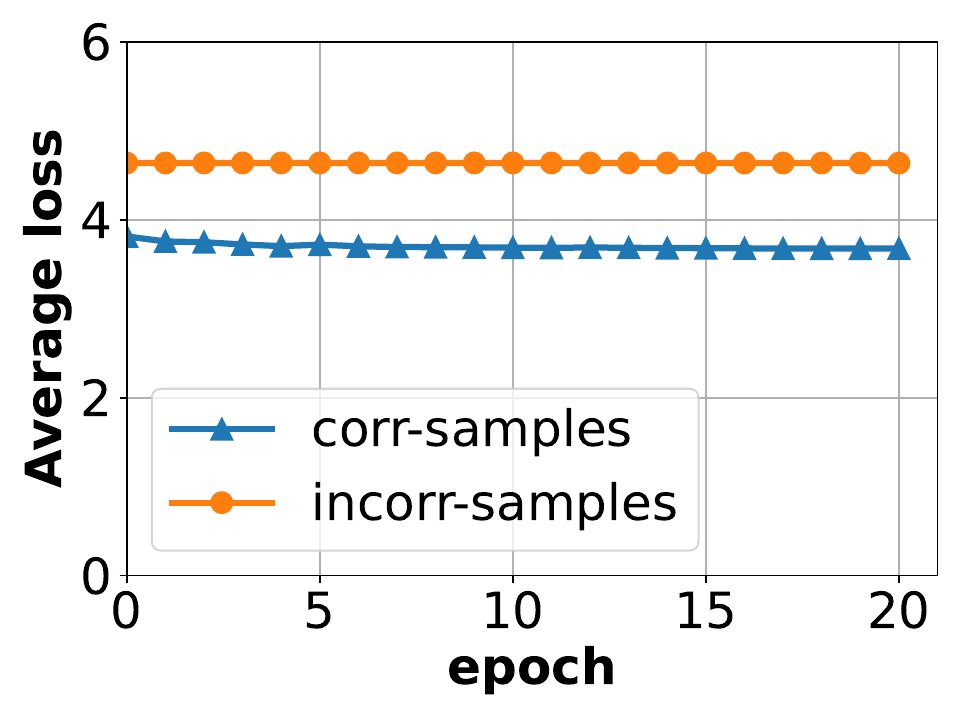}
  }
   \caption{Additional studies on the accuracy curve of prompt-tuning(CoOp) and our method in the early training stage on Catltech101 dataset with 80\% symmetric noise.}
   \label{sup:fig3}
\end{figure}
\vspace{0.3cm}

\vspace{0.3cm}
\begin{table*}[t!]
\belowrulesep=0pt
\aboverulesep=0pt
\centering
\caption{Accuracy (\%) on CIFAR10 dataset with symmetric and pair-flip noise. The bolded number indicates the performance of the best model. The NLPrompt method cannot work in this setting.}
\label{tab:tradition}
\begin{tabular}{c|c|c|ccccccc}
\toprule
{Dataset} & Noise Type & Noise Rate & CE    & LogitClip      & Bootstrap      & GCE            & Smoothing      & NLPrompt & DSPT           \\
\hline
\multirow{5}{*}{CIFAR10}             & \multirow{3}{*}{Sym} &20\%            & 74.01 & \textbf{84.70} & 74.80          & 84.47          & 76.29 & -        & 82.34          \\
                                     &  &50\%            & 57.79 & 74.56          & 54.86          & 64.77          & 52.34 & -        & \textbf{77.77} \\
                                     &  &80\%            & 25.18 & 22.88          & 32.54          & 36.03          & 36.36 & -        & \textbf{37.93} \\
                                     \cmidrule(r){2-10}
                                     & \multirow{2}{*}{Pair} &20\%           & 81.35 & 82.07 &     80.45    & \textbf{83.18}           & 81.48 & -        & 81.53          \\
                                     &  &40\%           & 72.14 & \textbf{81.02} & 70.31          & 73.27          & 72.09 & -        & 57.07          \\

\bottomrule
\end{tabular}
\end{table*}
\section{Additional Experiments}
\subsection{Experiments on Zero-shot Prediction Methods}
Unlike supervised learning methods, which can be heavily degraded by label noise, zero-shot prediction and unsupervised prompt-tuning rely solely on the knowledge learned in the pre-training stage, therefore being unaffected by label noise in the fine-tuning process. In our experiments, we examine the accuracy of zero-shot prediction, which uses the pretrained model for classification directly, and the robust unsupervised prompt-tuning method (Robust-UPL), which treats the zero-shot predictions as ground-truth labels for prompt tuning. The results in \cref{sup:tab1} show that a pre-trained CLIP model can achieve moderate performance on all datasets, but is still outperformed by robust prompt-tuning methods. Compared with zero-shot prediction, Robust-UPL offers only limited improvement, since it does not introduce additional supervised samples or other useful information. These experiments justify the necessity of using labeled samples, even if the labels are unreliable.

\subsection{Additional Experiments on The effect of Label Noise}
In this section, we present further exploration on the learning curves of CoOp, our method, and zero-shot predictions with increasing noise rate on UCF101 and EuroSAT datasets with symmetric and pair-flip label noise. The results can be found in \cref{sup:fig0}. Our method exhibits significant accuracy improvements compared to CoOp, especially under high noise ratios.

\subsection{Experiments on the Gradient Curves in the Training Process}
To verify that the double-softmax cross-entropy loss effectiveness of zeroing out the gradient from mislabeled samples, we conduct further experiments that record the average gradient from each correctly and mislabeled propagated to the output logits $\bm{z}$ on Caltech101 and DTD datasets with 60\% symmetric noise in the first 20 training epochs. The results can be found in \cref{sup:fig1}. Our method exhibits continuous noisy gradient suppression compared to CoOp.

\subsection{Additional Experiments on LogitClip's Hyperparameter}
To study the effect of hyperparameter $\tau$ on the LogitClip approach, we carried out further experiments on the UCF101 dataset with 80\% symmetric noise and 40\% flip noise. The results are shown in \cref{sup:fig2}. The accuracy curve is consistent with our conclusion drawn in \cref{furana}, confirming that LogitClip is sensitive to its hyperparameter.

\subsection{Additional Experiments on The Learning Curve of CoOp and Our Method}
In this section, we present further exploration of average losses for correct and incorrect predictions in the early training stage, where CoOp and our method are trained on the Caltech101 dataset with 80\% symmetric noise. The results are shown in \cref{sup:fig3}.

\subsection{Additional Experiments on Whether Double-Softmax is Suitable for Classical Noisy Label Learning Tasks}

To testify double-softmax cross-entropy loss under classical NLL tasks, we conduct experiments on the CIFAR10~\cite{cifar} image classification datasets with symmetric noise of \{20\%, 50\%, 80\%\}, and pair-flip noise of \{20\%, 40\%\}. We compare our method with other NLL-robust loss functions and mechanisms, including learning with standard cross-entropy loss, Label Smoothing, LogitClip, Bootstrapping~\cite{bootstrap}, and Generalized Cross Entropy (GCE) loss~\cite{gce}. ResNet34~\cite{resnet} is used as the backbone, and all models are trained for 200 epochs. 

As shown in Tab.~\ref{tab:tradition}, our double-softmax method does not perform as well as it does in VLM prompt-tuning. Although our model reaches the highest performance on CIFAR10 with Sym 50\% and 80\% by the advantage of 1\%-3\%, it is still outmatched by LogitClip or GCE in other cases, with the accuracy of more than 20\% lower than LogitClip on 40\% pair-flip noise. This is because double-softmax is specially designed for confident models with strong prior knowledge. It is also worth noticing that NLPrompt~\cite{nlprompt} does not apply to single-modal tasks, as it relies on pseudo-labels generated from text-image feature similarities.

\section{Additional Discussions}
\noindent\textit{Q1.}\textit{Why Are The Experiments Conducted on The Full Training Set, Instead of Using Few-Shot Learning?}

\noindent\textit{A1.} This work mainly focuses on noisy label learning rather than few-shot learning, and simply mixing up these two settings will introduce unexpected interference. For example, in a 50\% symmetric noise scenario, for a dataset with 1000 samples and 10 classes, the number of correctly labeled samples is approximately 50 for each class, while for 4-shot learning, this number can vary from 0 to 4. Previous studies investigating the effect of label noise on few-shot prompt-tuning have to fix the correct number to 2 for stability, which violates the randomness of label noise.

\vspace{0.3cm}
\noindent\textit{Q2.}\textit{Why Is The Double-Softmax Cross-Entropy Loss Unsuitable for Classic Noisy Label Learning?}

\noindent\textit{A2.} Double-softmax cross-entropy works by suppressing noisy gradient by model's confident predictions and restricting the loss magnitude. For classical noisy label learning backbone models that are trained from scratch, this mechanism will hinder the learning process in the early training stage. In addition, the values of the logit outputs of the backbone model in traditional NLL are different from those in VLMs, creating over-smoothed output distributions.

\vspace{0.3cm}
\noindent\textit{Q3.}\textit{Can The Double-Softmax Cross-Entropy Be Improved?}

\noindent\textit{A3.}
As shown in \cref{tab:main}, although our method has demonstrated state-of-the-art performance on various datasets, it is not always superior to other approaches, especially under moderate label noise cases. To solve this issue, self-adjustive hyperparameters need to be introduced to control the smoothness of the softmaxed logit according to the specific features of the task. In addition, it is also necessary to extend double-softmax prompt-tuning beyond CLIP-based classification to large vision–language models (LVLM), exploring its effectiveness for LVLM-based downstream noisy label learning. We believe that the simplicity, generality, and strong empirical robustness of double-softmax make it a promising building block for future noisy label learning frameworks for finetuning multi-modal foundation models.

%To split the supplementary pages from the main paper, you can use \href{https://support.apple.com/en-ca/guide/preview/prvw11793/mac#:~:text=Delete%20a%20page%20from%20a,or%20choose%20Edit%20%3E%20Delete).}{Preview (on macOS)}, \href{https://www.adobe.com/acrobat/how-to/delete-pages-from-pdf.html#:~:text=Choose%20%E2%80%9CTools%E2%80%9D%20%3E%20%E2%80%9COrganize,or%20pages%20from%20the%20file.}{Adobe Acrobat} (on all OSs), as well as \href{https://superuser.com/questions/517986/is-it-possible-to-delete-some-pages-of-a-pdf-document}{command line tools}.